\newtheorem{proposition}{Proposition}[section]
\newtheorem{definition}{Definition}[section]
\newtheorem{assumption}{Assumption}[section]
\newtheorem{lemma}[proposition]{Lemma}
\title{Class-aware Domain Knowledge Fusion and Fission for Continual Test-Time Adaptation
% (Class-aware Domain Prompting with) Dynamic Knowledge Fusion and Expansion for Continual Test-Time Adaptation
% ---------------------------------------------------------------------\\
% SKR: Selective Knowledge Refinement for\\ Continual Test-Time Adaptation\\
% ---------------------------------------------------------------------\\
% What to Remember: Continual Test-Time Adaptation with Selective Knowledge Persistence and Evolution\\
% ---------------------------------------------------------------------\\
% Think Twice, Adapt Once: Selective Knowledge Retention and Evolution for Continual Test-Time Learning\\
% ---------------------------------------------------------------------\\
% DPRS: Dynamic Prompt Revising and Selecting  for\\ Continual Test-Time Adaptation
}
\author{%
  Jiahuan Zhou$^1$,\  Chao Zhu$^1$,\  Zhenyu Cui$^1$,\  Zichen Liu$^1$,\  Xu Zou$^2$\thanks{Corresponding author}\ ,\  Gang Hua$^3$\\
   $^1$Wangxuan Institute of Computer Technology, Peking University, Beijing 100871, China\\
  % School of Artificial Intelligence and Automation,
  $^2$the Huazhong University of Science and Technology, Wuhan 430074,China\\
  $^3$Amazon.com, Inc, Bellevue, WA 98004, USA\\
  \texttt{jiahuanzhou@pku.edu.cn}, \texttt{zhuc2022@mail.sustech.edu.cn} \\
  \texttt{\{cuizhenyu,lzc20180720\}@stu.pku.edu.cn}, \texttt{zx@zoux.me}, \texttt{ganghua@gmail.com}
}
\begin{document}

\maketitle

\begin{abstract}
Continual Test-Time Adaptation (CTTA) aims to quickly fine-tune the model during the test phase so that it can adapt to multiple unknown downstream domain distributions without pre-acquiring downstream domain data. 
To this end, existing advanced CTTA methods mainly reduce the catastrophic forgetting of historical knowledge caused by irregular switching of downstream domain data by restoring the initial model or reusing historical models. However, these methods are usually accompanied by serious insufficient learning of new knowledge and interference from potentially harmful historical knowledge, resulting in severe performance degradation. To this end, we propose a class-aware domain Knowledge Fusion and Fission method for continual test-time adaptation, called KFF, which adaptively expands and merges class-aware domain knowledge in old and new domains according to the test-time data from different domains, where discriminative historical knowledge can be dynamically accumulated. Specifically, considering the huge domain gap within streaming data, a domain Knowledge FIssion (KFI) module is designed to adaptively separate new domain knowledge from a paired class-aware domain prompt pool, alleviating the impact of negative knowledge brought by old domains that are distinct from the current domain. Besides, to avoid the cumulative computation and storage overheads from continuously fissioning new knowledge, a domain Knowledge FUsion (KFU) module is further designed to merge the fissioned new knowledge into the existing knowledge pool with minimal cost, where a greedy knowledge dynamic merging strategy is designed to improve the compatibility of new and old knowledge while keeping the computational efficiency.
% Specifically, a domain prompting module is designed to match knowledge that is helpful to the current test domain from historical prompts. It first counts the style information of the current domain, and then matches the most appropriate domain-specific prompts from a dynamically updated domain pool for learning. Second, a class prompting module is designed to effectively utilise the domain-shared category information in historical data to assist the learning of the current category. Through the complementarity of domain-shared and domain-specific information, the proposed method can effectively utilise the diverse information between different domains to improve the accuracy of model classification. 
Extensive experiments on the ImageNet-C dataset verify the effectiveness of our proposed method against other methods.
The source code is available at \href{https://github.com/zhoujiahuan1991/NeurIPS2025-KFF}{https://github.com/zhoujiahuan1991/NeurIPS2025-KFF}.
\end{abstract}

\section{Introduction}

Recently, deep neural networks have demonstrated powerful adaptation capabilities on various vision tasks~\cite{dosovitskiy2020vit, he2019moco, zhou2025distribution, xu2024lstkc}, but still suffer from the well-known distributional shift problem~\cite{pmlr-v97-recht19a, hendrycks2019robustness, pmlr-v139-koh21a,xu2025dask,xu2024distribution} between training and test data.
% For example, due to natural changes in weather, environment, and imaging devices during test time~\cite{hendrycks2019robustness, Choi_2021_CVPR}, vision models in the real world are often challenged by the well-known severe distributional shift problem between training and test data, resulting in unexpected performance degradation~\cite{sun19ttt,sqcdsml,DBLP:journals/corr/HsuZG17aa,niu2023towards}.
To address this problem, Test-Time Adaptation (TTA) is proposed to adapt the test data in the target domain by using only unlabelled streaming test data~\cite{wang2021tent,Zhang2022MEMO,liu2021ttt++,NEURIPS2021_1415fe9f,bar2024protected}. Existing TTA methods have shown promising capacity to improve the generalizability of pre-trained models through self-supervised training methods~\cite{wang2021tent, niu2023towards, Zhang2022MEMO,zhang2025come, chen2022contrastive, yuan2023robust}. Despite some progress, most TTA methods merely focus on the generalizability within a single 
\begin{wrapfigure}{r}{0.5\textwidth}  % 减小宽度以减少右侧空白
% \vspace{-10pt}  % 调整与上方文本的距离
\centering
\includegraphics[width=\linewidth]{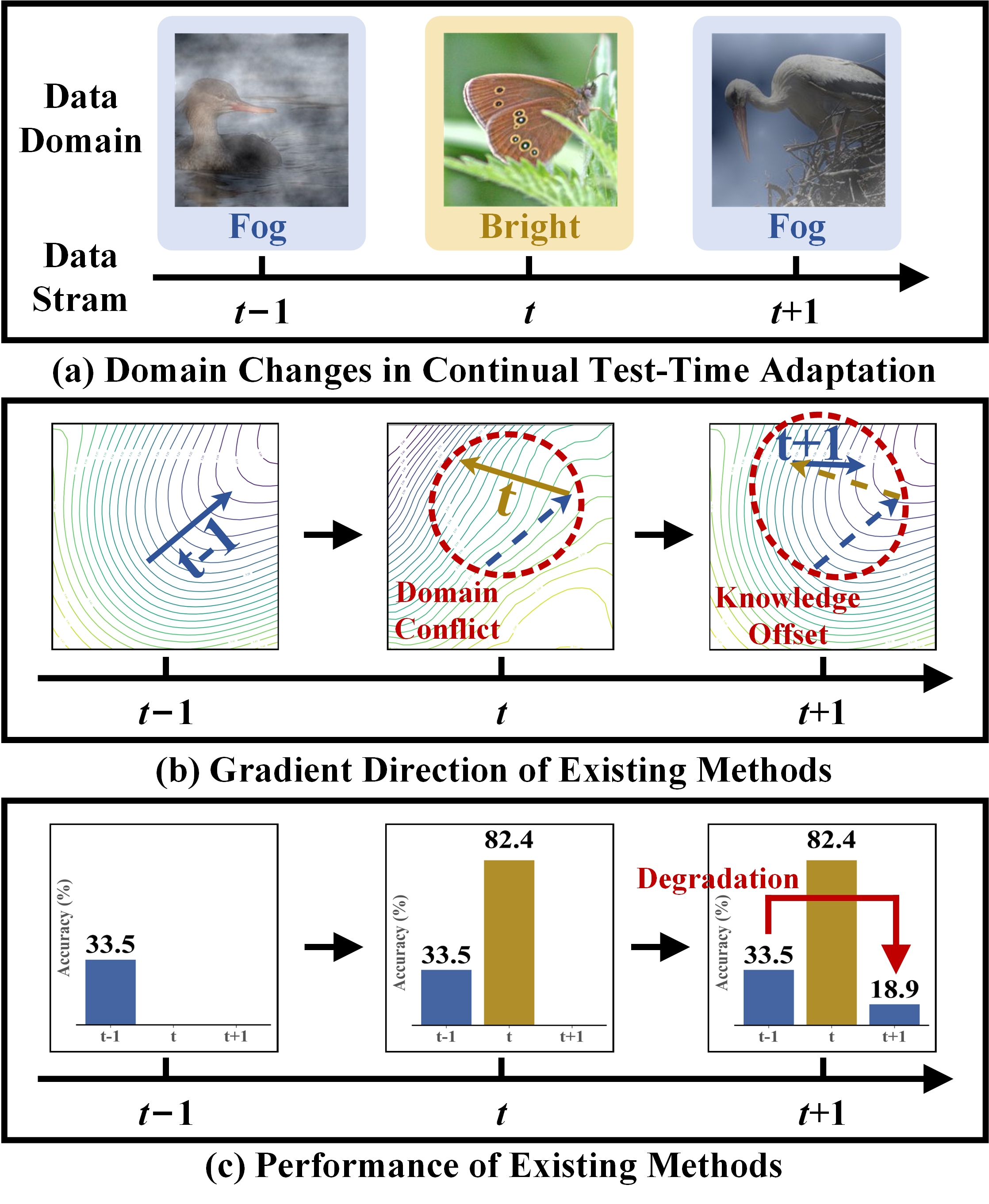}
\vspace{-12pt}  % 调整与图像和标题之间的距离
\caption{Existing methods~\cite{DBLP:journals/corr/abs-2403-10650} failed to accumulate knowledge in distinct domains, where conflicting knowledge disrupts the gradient direction during optimizing and remains a severe domain conflict problem.}
\label{fig: motivation}
\vspace{-10pt}  % 调整与下方文本的距离
\end{wrapfigure}
testing domain, ignoring the multiple test scenarios that may appear from time to time in real scenarios~\cite{wang2022continual,DBLP:conf/eccv/ChoiYCY22}.

% % \setlength{\columnsep}{1pt}
% \begin{wrapfigure}{r}{0.5\textwidth}  % 减小宽度以减少右侧空白
% % \vspace{-10pt}  % 调整与上方文本的距离
% \centering
% % \includegraphics[width=\linewidth]{figure/ctta_moti.pdf}  % 增加图片相对于wrapfigure的宽度
% \includegraphics[width=\linewidth]{figure/ctta_moti.jpg}
% \vspace{-12pt}  % 调整与图像和标题之间的距离
% \caption{Existing methods~\cite{DBLP:journals/corr/abs-2403-10650} failed to accumulate knowledge in distinct domains, where conflicting knowledge disrupts the gradient direction during optimizing and remains a severe domain conflict problem.}
% \label{fig: motivation}
% % \vspace{-8pt}  % 调整与下方文本的距离
% \end{wrapfigure}

To tackle the above issue, Continual Test-Time Adaptation (CTTA) aims to exploit the unlabelled test data streams with continually changing testing domains for test-time adaptation~\cite{wang2022continual,niu2022efficient}, as shown in 
% % \setlength{\columnsep}{1pt}
% \begin{wrapfigure}{r}{0.5\textwidth}  % 减小宽度以减少右侧空白
% % \vspace{-10pt}  % 调整与上方文本的距离
% \centering
% % \includegraphics[width=\linewidth]{figure/ctta_moti.pdf}  % 增加图片相对于wrapfigure的宽度
% \includegraphics[width=\linewidth]{figure/ctta_moti.jpg}
% \vspace{-12pt}  % 调整与图像和标题之间的距离
% \caption{Existing methods~\cite{DBLP:journals/corr/abs-2403-10650} failed to accumulate knowledge in distinct domains, where conflicting knowledge disrupts the gradient direction during optimizing and remains a severe domain conflict problem.}
% \label{fig: motivation}
% % \vspace{-8pt}  % 调整与下方文本的距离
% \end{wrapfigure}
Figure~\ref{fig: motivation}(a). The core challenge of CTTA is to adapt to the changing test data distribution by reducing error accumulation and preventing catastrophic forgetting to improve the robustness of long-term adaptation. To this end, some CTTA methods mainly preserve historical knowledge through regularization~\cite{wang2022continual, niu2022efficient, Song_2023_CVPR,DBLP:journals/corr/abs-2403-10650} or restoration~\cite{Niloy_2024_WACV, niu2022efficient,10656861}, which aim to slow down the learning of new data and correct domain style bias, thereby suppressing the impact of distributional shift problem caused by domain gaps, respectively. Unfortunately, these methods ignore the domain conflict between streaming data collected from distinct domains, which fails to fully accumulate differential domain knowledge in various test-time data. Specifically, advanced CTTA methods~\cite{Marsden_2024_WACV,zhang2025dpcoredynamicpromptcoreset} typically select and fuse parameters of historical models. Therefore, as shown in Figure~\ref{fig: motivation}(b), the knowledge offset between two distinct domains will inevitably disrupt the gradient direction during TTA optimization. Consequently, as Figure~\ref{fig: motivation}(c) illustrates, it not only brings an unlimited and continuously increasing storage overhead of historical models, but also compromises the discriminability due to the inevitable mixing of conflict domain knowledge. Therefore, CTTA remains a challenging issue to solve when considering the trade-off between adaptation efficiency and effectiveness.

To adapt pre-trained models to downstream tasks efficiently, prompt learning~\cite{10.5555/3495724.3495883,ponti-etal-2020-xcopa,vpt10.1007/978-3-031-19827-4_41, gao2023visual} proposes to adjust a small set of parameters while keeping the pre-trained parameters fixed. However, existing prompt learning methods typically face two key challenges. For one, existing prompt learning methods~\cite{zhang2025dpcoredynamicpromptcoreset,gan2023decoratenewcomersvisualdomain,yang2024exploring} fail to separate category information and domain information, resulting in a mixture of discriminative information from different domains in the resulting prompt, which inhibits the discrimination of the adapted prompts. Second, existing prompt learning methods~\cite{chen2024cross, zhang2025dpcoredynamicpromptcoreset} are difficult to dynamically adjust the prompting capacity according to different testing domains. Therefore, when the test domain switches irregularly, the adapted prompts typically weaken the robustness of discriminability in various domains.

Inspired by the above observations, we proposed a class-aware domain Knowledge Fusion and Fission (KFF) framework for continual test-time adaptation. To accumulate various knowledge in different domains, we designed a Knowledge FIssion (KFI) and a Knowledge FUsion (KFU) module to achieve the continual evolution of historical knowledge. Specifically, a KFI module is proposed to dynamically fission class-aware domain knowledge adapted to the current domain by evaluating the knowledge discrepancy between the current domain and the historical domains. Sequentially, a KFU module is introduced to merge the fissioned knowledge into the existing knowledge pool at a minimal cost. Among them, a greedy-based knowledge fusion strategy is proposed to achieve the fusion of various knowledge with minimal risk of old knowledge loss.

We evaluate our KFF with three common CTTA benchmarks (ImageNet-to-ImageNet-C~\cite{hendrycks2019robustness}, CIFAR100-to-CIFAR100C and CIFAR10-to-CIFAR10C~\cite{cifar}) under continual changing domains~\cite{wang2022continual,hoang2024petta,chen2024cross}. We further compare our KFF with state-of-the-art algorithms, including the latest advancements in CTTA and TTA fields. The experimental results show its effectiveness under various changes of testing domains. In particular, our KFF achieved 34.8\% error under the ImageNet-to-ImageNet-C distributional shift case, which surpasses the previous SOTA method DPCore~\cite{zhang2025dpcoredynamicpromptcoreset} by 5.1\%.

% \textbf{Contributions.} (i) We highlight that class-aware domain knowledge in continual test-time adaptation is an important problem but has not been investigated yet in the literature, while most existing CTTA algorithms sacrifice the discrimination of categories in the current domain to compromise the drastic domain change. (ii) As a solution, we propose KFF, which is robust and efficient to domain changes by achieving class-aware domain knowledge fusion and fission. (iii) Our evaluation with XXX CTTA benchmarks (XXX, XXX, and XXX) shows that our KFF outperforms the existing methods by a large margin.

\section{Related Work}
\paragraph{Test-time Adaptation.}
Test-time adaptation (TTA) aims to adapt pre-trained models to handle distribution shifts during inference, without access to source data or additional supervision~\cite{Chi_2021_CVPR, gao2023visual,5995317,jang2023testtime,9157725,liu2021ttt++,7298746}. Some methods employ self-supervised losses, such as entropy minimization~\cite{wang2021tent, niu2023towards, Zhang2022MEMO,zhang2025come} or consistency maximization~\cite{chen2022contrastive, yuan2023robust}, to adjust the model. Some methods involve preliminary steps to use source data: by extracting source characteristics such as statistics or features~\cite{zhang2023adanpc,mirza2023actmad,niu2024test,10943893}, or by warming up injected parameters on source data before adaptation~\cite{gao2023visual,lee2024becotta,Song_2023_CVPR}.
However, existing TTA methods typically assume a static target domain and fail to account for domain shifts that evolve over time~\cite{Brahma_2023_CVPR,gui2024atta,wang2022continual}. This limitation results in challenges such as error accumulation and catastrophic forgetting, which significantly degrade model performance and adaptability during inference.

\paragraph{Continual Test-time Adaptation.}  
Compared to TTA, continual test-time adaptation (CTTA) considers a more practical scenario in which the target domain continuously evolves. This setting exacerbates challenges like error accumulation~\cite{chen2022contrastive} and catastrophic forgetting~\cite{wang2022continual}. To address these issues, some methods such as EATA~\cite{niu2022efficient} and EcoTTA~\cite{DBLP:journals/corr/abs-2403-10650} introduce regularization strategies to mitigate error accumulation, while others like ERSK~\cite{Niloy_2024_WACV}, RDumb~\cite{press2023rdumb} and CoTTA~\cite{wang2022continual} utilize weight reset mechanisms to counteract catastrophic forgetting.
Beyond updating the model itself, some approaches leverage a small number of parameters to incrementally learn target-domain-specific knowledge (\emph{e.g.,} VDP~\cite{gan2023decoratenewcomersvisualdomain}, SVDP~\cite{yang2024exploring}, and ViDA~\cite{liu2023vida}). However, these methods struggle to retain domain-specific knowledge over time, resulting in poor performance when previously encountered domains reappear.
More recently, DPCore~\cite{zhang2025dpcoredynamicpromptcoreset} attempts to preserve historical domain knowledge and dynamically compose it during inference. While it effectively retains past domain information, it applies all previously stored knowledge to each new test batch without considering potential domain conflicts, which may lead to suboptimal performance.

\paragraph{Prompt Learning.}  
Prompt learning is initially introduced in natural language processing (NLP)~\cite{10.5555/3495724.3495883,ponti-etal-2020-xcopa} as a means of using learnable prompt tokens to better adapt pre-trained models to downstream tasks. Inspired by its success, researchers have extended this approach to computer vision~\cite{vpt10.1007/978-3-031-19827-4_41, gao2023visual,9878681, xu2025componential, li2024exemplar, liu2025stop}, achieving competitive results.
Motivated by this, several methods have explored the integration of prompt learning into TTA~\cite{gao2023visual,zhang2025scap,liu2024dart} and CTTA scenarios~\cite{yang2024exploring, gan2023decoratenewcomersvisualdomain,niu2024test,KARIMIAN2025121841}. For instance, VDP~\cite{gan2023decoratenewcomersvisualdomain} and SVDP~\cite{yang2024exploring} propose self-training models that adapt learnable visual prompts to dynamically changing domains. Other methods, such as DePT~\cite{gao2023visual}, CPT4~\cite{KARIMIAN2025121841} and DPCore~\cite{zhang2025dpcoredynamicpromptcoreset} introduce learnable prompts into Vision Transformers (ViTs), enhancing their ability to handle complex visual inputs and improving performance in TTA and CTTA settings.
However, existing prompt-based TTA/CTTA approaches focus primarily on prompting for knowledge at the domain level, often overlooking the shared class-level information across domains, which could further enhance generalization.

\section{Method}
\label{section:method}
\subsection{Problem Formulation and Notations}
% 需要再界定一下domain和distribution的区别
% D_T那里现在表示的是所有测试域，看看是不是用D_{T}表示单个测试域好一点
\paragraph{CTTA Problem Formulation.} 
We focus on continual test-time adaptation here, where the target distribution differs from the source distribution and is not static. The training data are from the source domain \(\mathcal{D}_{S} = \{Y_{S}, X_{S}\}\), and the test data are from different domain distributions dominated as \(\mathcal{D}_{T} = \{X_{T}\}_{T=1}^{N}\), where \(N\) represents the number of potential target domains, which is unknown and can be infinite or repeating. The model \(f_{\theta}\) encounters test batches \(\mathcal{B}^{T}_j = \{x_t\}_{t=0}^{b}, x_t \in X_{T}\) of batch size \(b\) in an online manner, which means it will only meet one batch \(\mathcal{B}_t\) at test time \(t\).
The entire process cannot access any source domain data and can only access the target domain data once. 
With continually changing domains, our goal is to adapt the pre-trained model to target domains and maintain the ability of the model on historical domain distributions. 

% \newpage
\paragraph{Vision Transformers(ViTs).}
We focus on ViTs for their outstanding representation learning powers. A ViTs \(f\) can be decomposed into a feature extractor \(\phi: \mathcal{X} \to \mathcal{Z}\) and a classifier head \(h: \mathcal{Z} \to \mathcal{Y}\), 
\newpage
such that \(f = \phi \circ h\). Let \(z_0 \in \mathcal{Z}\) represents the classification token in \(\mathcal{Z}\), the standard prediction process follows:
\begin{equation}
\begin{split}
    &\mathcal{Z} = \phi(\mathcal{X}), \  y = h(z_0)\\
    % &\hat{y} = \mathop{\arg\max}\limits_{i}(y_i)
    &\hat{y} = \text{softmax}(y)
\end{split} .
\end{equation}

% \subsection{Motivation}
% Visual Prompt Adaptation has shown effectiveness in single-domain TTA, however, using the same prompt can be ineffective or even harmful for some domains. (For example, Gaussian Noise-learned prompt increases error rate from 91.4\% to 95.7\% on Contrast compared to source model) These findings suggest that existing prompts can be effective for similar domains while new ones should be learned for significantly different domains. At the same time, current prompting methods designed for CTTA do not consider the shared class knowledge between domains and the potential conflict between classes.
% With the observation, we developed knowledge fission and fusion methods for both class-oriented prompts and domain-oriented prompts to apply proper knowledges for every test batches from different domains with different classes.

\subsection{Overview of the Proposed KFF}
\begin{figure}
  \centering
  % \fbox{\rule[-.5cm]{0cm}{4cm} \rule[-.5cm]{4cm}{0cm}}
  \includegraphics[width=\textwidth]{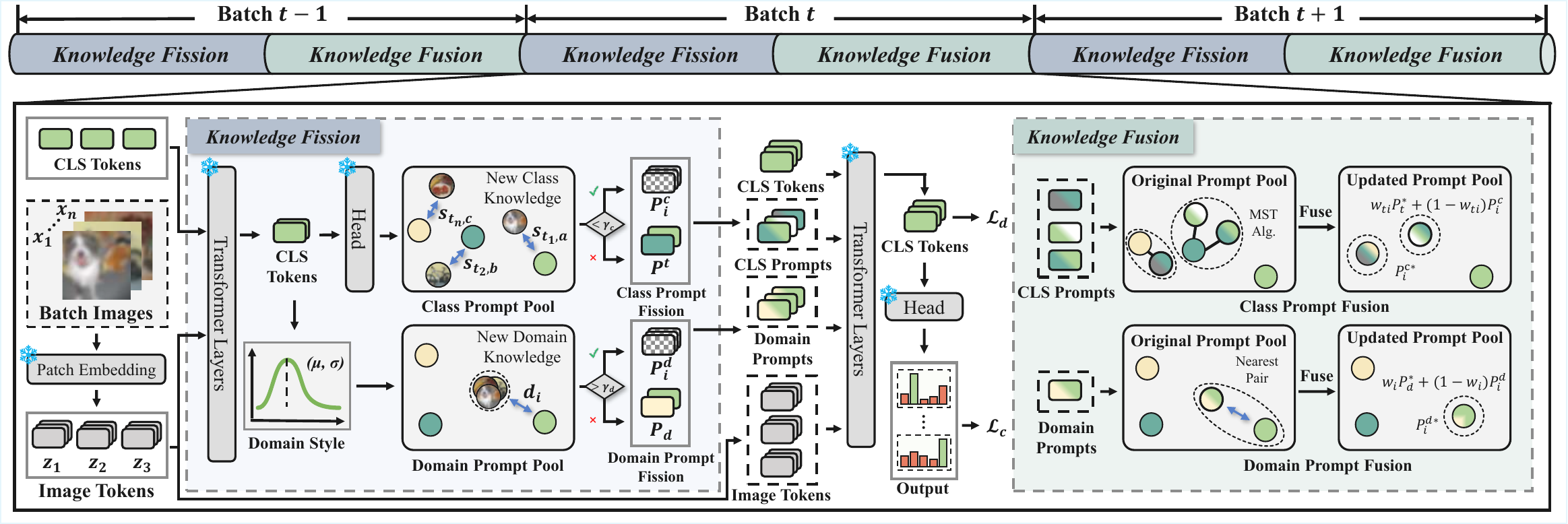}
  \vspace{-10pt}
  \caption{The Overall pipeline of our KFF with Knowledge Fission(KFI) and Knowledge Fusion(KFU). For each test batch, the KFI module dynamically fissions a domain prompt and several class prompts to adjust the source model. Sequentially, KFU merges the fissioned prompts after optimizing, achieving the balance between effectiveness and efficiency.}
  \label{method}
  \vspace{-10pt}
\end{figure}
% In KFI, class prompts are evaluated base on cosine similarity between pseudo-labels and prompt keys, fissioning new prompts for new classes if needed, while domain prompts are selected or fissioned with batch statistics. In KFU, The original class prompt pool is updated based on prediction entropy and we used the minimum spanning tree algorithm to cluster and fuse prompts for size control. The original domain prompt pool updates itself with learned domain prompts and fuses the closest prompts when the pool reaches its capacity.

As shown in Figure~\ref{method}, our proposed method mainly contains two key modules: Knowledge Fission and Knowledge Fusion. Given input test batch \(\mathcal{B}^{T}_j\) from domain \(\mathcal{D}_{T}\), we choose or fission class prompts \(\{\mathcal{P}^t\}_{t=0}^{b}\) for every single test sample in \(\{x_t\}_{t=0}^{b}\) and domain prompts \(\mathcal{P}^{T}_j\) for the whole test batch \(\mathcal{B}^{T}_j\) and get prompts \(\mathcal{P} = \{\mathcal{P}^{T}_j, \{\mathcal{P}^t\}_{t=0}^{b}\}\), then we apply it to the pre-trained ViT model \(\theta_s\) to get the predicted category \(\hat{y} = f(\mathcal{B}^{T}_j; \theta_s, \mathcal{P})\) of input test batch \(\mathcal{B}^{T}_j\). 
To optimize model performance, prompts are trained using a bi-level loss function that integrates both batch-level domain alignment (\(\mathcal{L}_d\)) and instance-level classification entropy (\(\mathcal{L}_c\)).
We adopt domain alignment for batch-level loss computation, leveraging its established effectiveness and computational efficiency~\cite{NIPS2006_b1b0432c, mehraunderstanding}. Specifically, the domain alignment loss \(\mathcal{L}_{d}\) measures the discrepancy between the source domain and the current domain by calculating the combined Euclidean distance of their feature means and standard deviations:
\begin{equation}
    \mathcal{L}_{d} = \left \| \mu^s-\mu^{T}_j(\mathcal{P}) \right \| _2 + \alpha\left \| \sigma^s-\sigma^{T}_j(\mathcal{P}) \right \| _2 .
\end{equation}
Notably, while this domain alignment distance calculation requires source data, labels are not needed, as we only perform marginal distribution alignment. And approximately 300 unlabelled source examples are enough for stable performance~\cite{zhang2025dpcoredynamicpromptcoreset}.
At the instance level, we minimize the prediction entropy using:
\begin{equation}
    \mathcal{L}_{c} = \frac{1}{b}\sum^{b}_{t=0}\mathcal{H}(\hat{y}_t) .
\end{equation}
The model then learned \(\mathcal{P}^*\) from \(\mathcal{P}\) by minimizing \(\mathcal{L} = \mathcal{L}_{d} + a\mathcal{L}_{c}\) and use the learned prompts to update the two prompt pools, enabling adaptive knowledge accumulation across batches. After the update, a fusion step is carried out to control the size of the prompt pool, i.e. \(N_c\) and \(N_d\), and retain historical knowledge, balancing computational efficiency with the preservation of key information for improved generalization and performance.

\subsection{Knowledge Fission Module}
Noticed that there might be performance degradation due to domain conflicts, as shown in Figure~\ref{fig: motivation}, we proposed a knowledge fission strategy to prevent the current test batch from being influenced by the conflict historical knowledge as follows:

\paragraph{Class Knowledge Fission.}
In this submodule, we aim to handle the knowledge fission at class level.
To achieve this, we use the cosine similarity \(s_{t,i} = \text{sim}(\tilde{y}_t, y_i)\) between pseudo labels \(\tilde{y}_t\) and prompt keys \(y_i\) to evaluate prompt \(\mathcal{P}_i\) for every test sample in the whole batch. 
Specifically, we first extract pseudo-labels \(\tilde{y}_t\) without relying on any prompt, which allows us to obtain an initial understanding of the samples' characteristics without the influence of existing prompts. Then, we evaluate
\(\mathcal{P}_i\) by \(s_{t,i}\), which helps us to determine how well a prompt aligns with the pseudo-label of a test sample. The prompts that \(s_{t,i} > \gamma_c\) will be selected as candidates and will be used in a weighted manner:
\begin{equation}
    \mathcal{P}^t = \sum_{i=0}^{N_c}w_i\mathcal{P}_i^c,\ w_i = \frac{\exp(s_{t,i}/\tau_c)}{\sum_{i=0}^{N_c}\exp(s_{t,i}/\tau_c)} .
\end{equation}

If no candidates were selected for \(x_t\), which means the model finds a new class which has not been seen before and is not similar to any seen classes, the model will fission a new prompt \(\mathcal{P}^t\) for the test sample. This new prompt is designed to capture the unique characteristics of the new class.
It is worth mentioning that at the initial state when the pool is empty, we will fission a prompt for every single test sample. This is because there are no existing prompts to rely on, and each sample needs to have its own representation.
The class prompt \(\mathcal{P}_c\) is concatenated by \(\mathcal{P}^t\) of each test sample \(x_t\) and will be used for prediction and learning:
\begin{equation}
    \mathcal{P}_c = \left [ \mathcal{P}^0, \mathcal{P}^1, \dots , \mathcal{P}^b  \right ] .
\end{equation}

\paragraph{Domain Knowledge Fission.}
The module takes the statical numbers \(\Gamma^{T}_j\), \emph{i.e.} mean \(\mu\) and standard \(\sigma\), of the test batch \(\mathcal{B}^{T}_j\) as input key to match the domain prompt in the domain prompt pool. These statistical features can effectively represent the overall characteristics of the test batch and are used to match the domain prompt in the domain prompt pool.
The prompt pool will select prompts where \(d_i=d(\Gamma^{T}_j, \Gamma_i) < \gamma_d\) as candidates based on the Euclidean distance of input statistical numbers \(\Gamma^{T}_j\) and prompt keys \(\Gamma_i\):
\begin{equation}
    d_i = d(\Gamma^{T}_j, \Gamma_i) = \left \| \Gamma^{T}_j - \Gamma_i \right \| _2, \ \Gamma = \{\mu,\sigma\} .
\end{equation}
The selected prompts will be used by:
\begin{equation}
    \mathcal{P}_d = \mathcal{P}^{T}_j = \sum_{i=0}^{N_d}w_i\mathcal{P}_i^d,\ w_i = \frac{\exp(-d_i/\tau_d)}{\sum_{i=0}^{N_d}\exp(-d_i/\tau_d)} .
\end{equation}
If no prompt is selected, which implies that the test batch comes from a new domain and is not similar to historical domains, or it is the first batch for testing, the model will fission a new prompt for the test batch. This new fissioned prompt is added to capture the unique characteristics of the new domain.

\subsection{Knowledge Fusion Module}
Fission-only method will cause the prompt pool to grow up without limitation, which may result in inefficiency, inadequate understanding of historical knowledge and unnecessary retention of duplicate historical information. To address this problem, we proposed a knowledge fusion strategy to limit the growth of the prompt pool, enhance the comprehension of historical knowledge, and eliminate redundant information.

\paragraph{Class Knowledge Fusion.}
Once the model has learned the class prompts \(\mathcal{P}^*_c\), we incorporate them into the prompt pool using Algorithm \ref{alg:UCP}.
Inspired by the finding suggested in EATA~\cite{niu2022efficient} that adaptation on test samples with very high entropy may hurt performance, we use a threshold \(\gamma_h\) to control whether the test sample should be used for updating the prompt pool. Those learned prompts with \(\mathcal{H}(\hat{y}_t) > \gamma_h\) will be used for updating the prompt pool: the learned fissioned prompts will be directly add to the original prompt pool with its pseudo label \(\tilde{y}_t\), otherwise the prompts will update all the prompts that composed it in the original prompt pool with the weight of composition:
\begin{equation}
    \mathcal{P}_i^{c*} = \frac{1}{b}\sum^{b}_{t=0} \left [ w_{ti}\mathcal{P}_t^* + (1-w_{ti})\mathcal{P}_i^c \right ] .
\end{equation}

To keep the size of the prompt pool for efficiency as well as maintain the knowledge in it as much as possible, we cluster and fuse the prompts in the original prompt pool with a minimum spanning tree(MST). We construct a graph \(G=(V,E)\) where \(V\) is the set of all prompts in the original prompt pool and \(E\) is the set of edges connecting each pair of prompts, calculated by cosine similarity:
\begin{equation}
e_{ij} = \frac{y_iy_j}{\left \|  y_i \right \| \left \| y_j\right \|} .
\end{equation}
By applying the MST algorithm, \emph{i.e.} Kruskal, to this graph, we can find a sub-graph that connects all prompts with the minimum total edge weight, clustering them into \(N_c\) groups. Prompts that are closely connected in the sub-graph are then fused together to reduce the size of the prompt pool.
% \vspace{-10pt}
\begin{algorithm}[]
    \caption{Algorithm of Updating Class Prompt Pool}
    \label{alg:UCP}
    
    \begin{algorithmic}[1]
        \REQUIRE Output \(\hat{y}\), learned prompts \(\mathcal{P}^*_c\) and weights of prompts \(w\)
        % \ENSURE XXX
        % \STATE  xxxxx

        \FOR{each \(t\) \(\in\) [0, len(\(\mathcal{P}^*_c\)))}
            \IF{softmax\_entropy(\(\hat{y}_t\)) > \(\gamma_h\)}
                \STATE continue
            \ENDIF
            
            \IF{max(\(w_t\)) is NaN}
                \STATE add \((\hat{y}_t, \mathcal{P}_t^*)\) to class prompt pool
            \ELSE
                \FOR{each $(y_i, \mathcal{P}_i^c) \in $ domain prompt pool}
                % 这里还会更新标签
                    \STATE \(y_i^* \leftarrow  \alpha_c w_{ti} \hat{y}_t + (1- \alpha_c w_{ti}) y_i\)
                    \STATE \(\mathcal{P}_i^{c*} \leftarrow w_{ti} \mathcal{P}_t^* + (1-w_{ti}) \mathcal{P}_i^c\)
                \ENDFOR
            \ENDIF
        \ENDFOR

        \IF{class prompt pool is full}
            \STATE cluster prompts in class prompt pool into \(N_c\) groups
            \STATE merge all the prompts in each group
        \ENDIF
    \end{algorithmic}
\end{algorithm}
\vspace{-10pt}

\paragraph{Domain Knowledge Fusion.}
After backwards propagation, the model will use learned domain prompt \(\mathcal{P}^*_d\) to update the domain prompt pool with Algorithm \ref{alg:UDP}. We will add a new prompt to the original domain prompt pool if the prompt is a fissioned prompt, otherwise, it will update all the prompts that compose it in the original prompt pool with the weight of composition:
\begin{equation}
    \mathcal{P}_i^{d*} = w_i\mathcal{P}_d^* + (1-w_i)\mathcal{P}_i^d .
\end{equation}

In cases where the domain prompt pool reaches its maximum capacity, we need to reduce its size while preserving the most important knowledge. To achieve this, we fuse the closest pair of prompts in the pool by Euclidean distance \(d(\Gamma_i, \Gamma_j)\).

\vspace{-5pt}

\begin{algorithm}[]
    \caption{Algorithm of Updating Domain Prompt Pool}
    \label{alg:UDP}
    
    \begin{algorithmic}[1]
        \REQUIRE Test batch statistic \(\Gamma^T_j\), learned prompt \(\mathcal{P}^*_d\) and weights of prompts \(w\) 
        % \ENSURE XXX
        % \STATE  xxxxx

        \IF{\(max(w)\) is NaN}
            \STATE add \((\Gamma^T_j, \mathcal{P}^*_d)\) to domain prompt pool
            \STATE fuse the nearest pair if domain prompt pool is full
        \ELSE
            \FOR{each \((\Gamma_i, \mathcal{P}_i^d) \in \) domain prompt pool}
                \STATE \(\Gamma_i^* \leftarrow \alpha_d w_i \Gamma^T_j + (1-\alpha_d w_i) \Gamma_i\)
                \STATE \(\mathcal{P}_i^{d*} \leftarrow w_i \mathcal{P}^*_d + (1-w_i) \mathcal{P}_i^d\)
            \ENDFOR
            
        \ENDIF

    \end{algorithmic}
\end{algorithm}

\vspace{-10pt}

\section{Experiments}
\label{section:experiments}
\subsection{Experiment Setup}
\label{section: experiment-setup}

% \begin{wrapfigure}{r}{0.4\textwidth}
%     \centering
%     \vspace{-14pt}
%     \includegraphics[width=\linewidth]{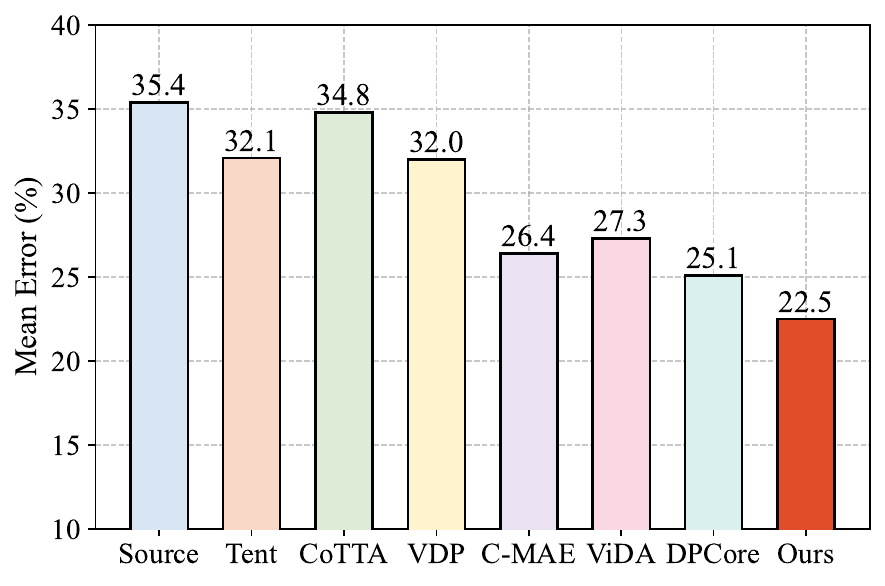}
%     \vspace{-10pt}
%     \caption{Classification error rate(\%) for CIFAR100-to-CIFAR100-C online CTTA task. 
%     % , evaluated on ViT-Base backbone with corruption severity level 5.
%     }
%     \vspace{-15pt}
%     \label{fig: cifar100}
% \end{wrapfigure}

\paragraph{Datasets.} 
We evaluate our proposed method on three classification CTTA datasets: ImageNet-to-ImageNet-C~\cite{hendrycks2019robustness}, CIFAR100-to-CIFAR100C and CIFAR10-to-CIFAR10C~\cite{cifar}. Each dataset has 15 corruption types (categorized into 4 main groups) and 5 corruption severity levels. We use the highest level of corruption severity and keep the same order as CoTTA~\cite{wang2022continual} in CTTA settings.

\paragraph{Comparison Methods.}
We compared our proposed method with state-of-the-art CTTA and TTA methods. In detail, we investigated general TTA methods TENT~\cite{wang2021tent}, SAR~\cite{niu2023towards} and POEM~\cite{bar2024protected} and CTTA methods CoTTA~\cite{wang2022continual}, VDP~\cite{gan2023decoratenewcomersvisualdomain}, 
% EATA~\cite{niu2022efficient}, BECoTTA~\cite{lee2024becotta}, 
RoTTA~\cite{yuan2023robust}, C-MAE~\cite{liu2024continual}, ROID~\cite{Marsden_2024_WACV}, ViDA~\cite{liu2023vida}, 
CoLA~\cite{chen2024cross} with DeYO~\cite{lee2024entropy}, 
PALM~\cite{DBLP:journals/corr/abs-2403-10650}, and DPCore~\cite{zhang2025dpcoredynamicpromptcoreset}.

\begin{wrapfigure}{r}{0.4\textwidth}
    \centering
    % \vspace{-14pt}
    \vspace{-8pt}
    \includegraphics[width=\linewidth]{}
    \vspace{-10pt}
    \caption{Classification error rate(\%) for CIFAR100-to-CIFAR100-C online CTTA task. 
    % , evaluated on ViT-Base backbone with corruption severity level 5.
    }
    \vspace{-15pt}
    \label{fig: cifar100}
\end{wrapfigure}

\paragraph{Implementation Details.}
We followed the implementation details specified in previous work~\cite{wang2022continual,zhang2025dpcoredynamicpromptcoreset}. We use ViT-B/16 as our backbone. We utilize the AdamW optimizer with a learning rate 0.1 for domain prompts and 0.001 for class prompts with a batch size \(b=64\). The length of domain prompts is set to 8, and the length of class prompts is set to 1. Other hyper-parameters \(\gamma_d\), \(\gamma_c\), \(\gamma_h\), \(\alpha_d\), \(\alpha_c\), \(\tau_d\), \(\tau_c\), \(a\), \(N_d\) and \(N_c\) are set to 25, 0.005, 2, 0.1, 0.1, 3, 1, 3, 20 and 100. The hyper-parameters were determined using four disjoint validation corruptions [Speckle Noise, Gaussian Blur, Spatter, Saturate] from ImageNet-C, following MEMO~\cite{Zhang2022MEMO}. 
% All the methods and all experiments are implemented on a single NVIDIA 4090 GPU.

% \begin{figure}[htbp]
%     \begin{minipage}[b]{0.42\textwidth}
%         \centering
%         \includegraphics[width=\linewidth]{figure/mean_error_barplot.pdf}
%         % \vspace{-10pt}
%         \caption{Classification error rate(\%) for CIFAR100-to-CIFAR100-C online CTTA task, evaluated on ViT-Base backbone with corruption severity level 5.}
%         \label{fig: cifar100}
%     \end{minipage}
%     % \hfill % 自动填充空白，使两张图片紧凑排列
%     % \begin{minipage}[b]{0.48\textwidth}
%     %     \centering
%     %     \vspace{-100pt}
%     %     \includegraphics[width=\linewidth]{figure/params_stage.png}
%     %     \vspace{-10pt}
%     %     \caption{Computational analysis on ImageNet-to-ImageNet-C under 10 repeated rounds.}
%     %     \label{fig: stage_param}
%     % \end{minipage}
% \end{figure}
\subsection{Main Results}

% \begin{wrapfigure}{r}{0.4\textwidth}  % 减小宽度以减少右侧空白
% % \vspace{-10pt}  % 调整与上方文本的距离
% \centering
% % \includegraphics[width=\linewidth]{figure/ctta_moti.pdf}  % 增加图片相对于wrapfigure的宽度
% \includegraphics[width=\linewidth]{figure/mean_error_barplot.pdf}
% \vspace{-12pt}  % 调整与图像和标题之间的距离
% \caption{Classification error rate(\%) for ImageNet-to-ImageNet-C in 10 rounds (R1-R10)}
% \label{fig: cifar100}
% % \vspace{-8pt}  % 调整与下方文本的距离
% \end{wrapfigure}

% \begin{wrapfigure}{r}{0.5\textwidth}  % 减小宽度以减少右侧空白
% % \vspace{-10pt}  % 调整与上方文本的距离
% \centering
% % \includegraphics[width=\linewidth]{figure/ctta_moti.pdf}  % 增加图片相对于wrapfigure的宽度
% \includegraphics[width=\linewidth]{figure/params_stage.png}
% \vspace{-12pt}  % 调整与图像和标题之间的距离
% \caption{Computational analysis on ImageNet-to-ImageNet-C.}
% \label{fig: stage_param}
% % \vspace{-8pt}  % 调整与下方文本的距离
% \end{wrapfigure}

\begin{wrapfigure}{r}{0.4\textwidth}
% \begin{table}[]
    \centering
    \vspace{-15pt}
    \includegraphics[width=\linewidth]{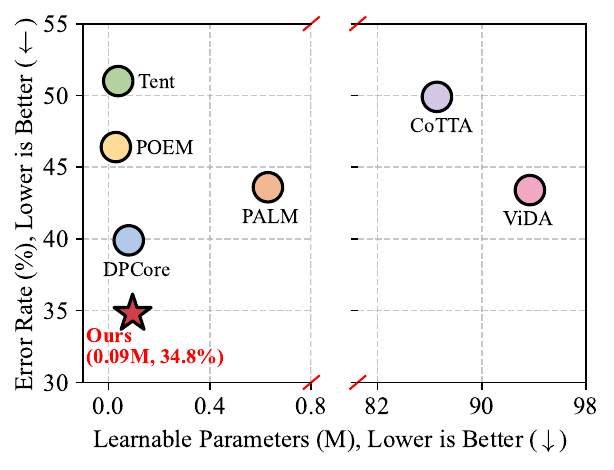}
    \vspace{-15pt}
    \caption{Computational analysis on ImageNet-C.}
    \vspace{-10pt}
    \label{fig:Params}
% \end{table}
\end{wrapfigure}

% \subsubsection{CTTA with Non-Repeating Domains}
\paragraph{CTTA with Non-Repeating Domains.}
% We evaluate the proposed method on the ImageNet-to-ImageNet-C task. As shown in Table.\ref{main_imagenetc}, our method achieves a SOTA improvement of +21\% over the source model and has an improvement of +5.1\% over the second best method, DPCore. We also evaluate our method on CIFAR100-to-CIFAR100C and CIFAR10-to-CIFAR10C tasks, on which we achieves an improvement of +2.6\% and +3.0\% respectively compared to DPCore, both reaches SOTA performance.
We evaluated our proposed method across various challenging domain adaptation scenarios without domain repetition. For the ImageNet-to-ImageNet-C task, as shown in Table~\ref{main_imagenetc}, our method achieves a notable state-of-the-art (SOTA) improvement of 21\% over the source model. Compared to the second-best method, DPCore, it still shows a significant performance improvement of \textbf{5.1\%}. Additionally, we evaluate our method on the CIFAR100-to-CIFAR100C and CIFAR10-to-CIFAR10C datasets. In the CIFAR100-to-CIFAR100C task, our method outperforms DPCore by 2.6\%, while in the CIFAR10-to-CIFAR10C task, the improvement reaches 3.0\%. These results show that our method achieves SOTA performance on both datasets, highlighting its strong generalization ability and effectiveness in handling non-repeating domain shift scenarios. Details for CIFAR10/100-to-CIFAR10/100C are available at Section~\ref{section:additional}.
% \documentclass{article}
% \usepackage{graphicx}
% \usepackage{booktabs}
% \usepackage[table]{xcolor}
% \usepackage{array}
% \usepackage{colortbl}
% \usepackage{multirow}
% \usepackage{amssymb}
% \usepackage{amsmath}
% \usepackage{caption}
% \usepackage{lmodern}

% \begin{document}

\begin{table}[t]
\small
    \caption{Classification error rate (\%) for ImageNet-to-ImageNet-C online CTTA task, evaluated on ViT-Base backbone with corruption severity level 5.}
    \label{main_imagenetc}
    \centering
    % \begin{tabular}{ll|*{16}{>{\centering\arraybackslash}m{1.5em}}|c}
    % \begin{tabular}{ll|ccc|cccc|cccc|cccc|c}
    \begin{tabularx}{\textwidth}{@{}
    >{\raggedright\arraybackslash}p{1.1cm} @{}
    >{\centering\arraybackslash}p{0.7cm}  
    >{\centering\arraybackslash}X 
    >{\centering\arraybackslash}X 
    >{\centering\arraybackslash}X 
    >{\centering\arraybackslash}X 
    >{\centering\arraybackslash}X 
    >{\centering\arraybackslash}X 
    >{\centering\arraybackslash}X 
    >{\centering\arraybackslash}X 
    >{\centering\arraybackslash}X 
    >{\centering\arraybackslash}X 
    >{\centering\arraybackslash}X 
    >{\centering\arraybackslash}X 
    >{\centering\arraybackslash}X 
    >{\centering\arraybackslash}X 
    >{\centering\arraybackslash}p{0.5cm} |
    >{\centering\arraybackslash}X
    }
    \toprule
    \vspace{-3.2em} \makecell[c]{\rotatebox{90}{Method}}
    % \rotatebox{90}{Method}
    % Methods
    & 
    % \tiny{\textit{Venue}}
    % \textit{Venue}
    \rotatebox{90}{\textit{Venue}}
    &
    \rotatebox{90}{Gauss} &
    \rotatebox{90}{Shot} &
    \rotatebox{90}{Impulse} &
    \rotatebox{90}{Defocus} &
    \rotatebox{90}{Glass} &
    \rotatebox{90}{Motion} &
    \rotatebox{90}{Zoom} &
    \rotatebox{90}{Snow} &
    \rotatebox{90}{Frost} &
    \rotatebox{90}{Fog} &
    \rotatebox{90}{Bright} &
    \rotatebox{90}{Contrast} &
    \rotatebox{90}{Elastic} &
    \rotatebox{90}{Pixel} &
    \rotatebox{90}{JPEG} & 
    \rotatebox{90}{\textbf{Mean}}\\
    \midrule
    Source & - & 53.0 & 51.8 & 52.1 & 68.5 & 78.8 & 58.5 & 63.3 & 49.9 & 54.2 & 57.7 & 26.4 & 91.4 & 57.5 & 38.0 & 36.2 & 55.8 \\
    \midrule
    Tent  & \tiny{\textit{ICLR'21}} & 52.2 & 48.9 & 49.2 & 65.8 & 73.0 & 54.5 & 58.4 & 44.0 & 47.7 & 50.3 & 23.9 & 72.8 & 55.7 & 34.4 & 33.9 & 51.0 \\
    % Tent  & \tiny{\textit{ICLR'21}} & 44.4 & 40.6 & 40.1 & 55.9 & 55.1 & 43.9 & 50.0 & 37.1 & 41.4 & 39.4 & 22.9 & 44.5 & 50.9 & 30.2 & 30.6 & 41.8 \\
    SAR & \tiny{\textit{ICLR'23}} & 45.8 & 45.9 & 47.7 & 52.3 & 63.7 & 46.2 & 50.9 & 40.3 & 42.4 & 41.8 & 24.4 & 53.4 & 53.6 & 38.4 & 36.6 & 45.6\\
    POEM  & \tiny{\textit{NeurIPS'24}} & 43.7 & 41.7 & 41.9 & 48.4 & 53.2 & \underline{42.9} & 50.1 & 39.3 & 36.8 & 35.9 & 31.3 & 99.9 & 54.6 & 34.4 & 41.6 & 46.4 \\
    \midrule
    % CoTTA & \tiny{\textit{CVPR'22}} & 52.9 & 51.6 & 51.4 & 68.3 & 78.1 & 57.1 & 62.0 & 48.2 & 52.7 & 55.3 & 25.9 & 90.0 & 56.4 & 36.4 & 35.2 & 54.8 \\
    CoTTA & \tiny{\textit{CVPR'22}} & 47.7 & 47.0 & 46.4 & 57.5 & 71.2 & 52.2 & 59.3 & 39.7 & 39.3 & 62.8 & 24.1 & 78.9 & 57.6 & 33.4 & 31.1 & 49.9 \\
    VDP   & \tiny{\textit{AAAI'23}} & 52.7 & 51.6 & 50.1 & 58.1 & 70.2 & 56.1 & 58.1 & 42.1 & 46.1 & 45.8 & 23.6 & 70.4 & 54.9 & 34.5 & 36.1 & 50.0 \\
    RoTTA   & \tiny{\textit{CVPR'23}} & 51.5 & 50.3 & 51.7 & 60.4 & 58.7 & 52.6 & 54.8 & 47.2 & 43.5 & 42.8 & 25.9 & 49.1 & 48.8 & 46.3 & 39.7 & 48.2 \\
    C-MAE & \tiny{\textit{CVPR'24}} & 46.3 & 41.9 & 42.5 & 51.4 & 54.9 & 43.3 & \underline{40.7} & \underline{34.2} & \underline{35.8} & 64.3 & 23.4 & 60.3 & \underline{37.5} & \underline{29.2
    }& 31.4 & 42.5 \\
    ROID & \tiny{\textit{WACV'24}} &  57.6 & 51.5 & 52.2 & 55.1 & 52.4 & 46.5 & 47.2 & 45.6 & 39.5 & \underline{36.0} & 26.0 & 45.0 & 43.8 & 39.7 & 36.3 & 45.0 \\
    ViDA  & \tiny{\textit{ICLR'24}} & 47.7 & 42.5 & 42.9 & 52.2 & 56.9 & 45.5 & 48.9 & 38.9 & 42.7 & 40.7 & 24.3 & 52.8 & 49.1 & 33.5 & 33.1 & 43.4 \\
    % PeTTA & \tiny{\textit{NeurIPS'24}} &  &  &  &  &  &  &  &  &  &  &  &  &  &  &  & - \\
    % CoLA & \tiny{\textit{NeurIPS'24}} & 46.7 & 46.1 & 36.8 & 35.8 & 46.5 & 45.5 & 37.8 & 34.3 & 45.2 & 40.6 & 30.8 & 28.8 & 22.9 & 80.2 & 30.8 & 40.6 \\
    PALM & \tiny{\textit{AAAI'25}} & \underline{41.8} & 39.9 & 39.8 & 57.4 & 63.7 & 47.7 & 53.6 & 36.1 & 39.9 & 41.5 & \underline{21.5} & 56.8 & 51.6 & 31.7 & 30.7 & 43.6 \\
    DPCore & \tiny{\textit{ICML'25}} & 42.2& \underline{38.7} & \underline{39.3} & \underline{47.2} & \underline{51.4} & 47.7& 46.9& 39.3& 36.9& 37.4& 22.0& \underline{44.4} & 45.1& 30.9& \underline{29.6} & \underline{39.9} \\
    Ours  & \tiny{\textit{-}} & \textbf{40.1} & \textbf{36.5} & \textbf{36.0} & \textbf{44.5} & \textbf{45.6} & \textbf{39.1} & \textbf{39.1} & \textbf{32.2} & \textbf{31.0} & \textbf{30.0} & \textbf{20.9} & \textbf{38.3} & \textbf{34.9} & \textbf{26.3} & \textbf{27.4} & \textbf{34.8} \\
    \bottomrule
     \end{tabularx}
     \vspace{-10pt}
\end{table}

\paragraph{CTTA with Repeating Domains.}
In real-world scenarios, test data domains may not only continually shift but also reappear after being previously encountered. Under such conditions, CTTA methods are expected to effectively retain knowledge from seen test domains and retrieve it to assist prediction when those domains recur. 
As shown in Table~\ref{5000_imagenetc}, we train on all 15 domains of ImageNet-C for 10 repeated rounds and compare the mean performance of existing TTA and CTTA methods. It can be seen that our method achieves 34.5\% on mean error rate, yielding an improvement of \textbf{9.9\%} compared to DPCore.
% As shown in Table~\ref{5000_imagenetc}, we train on all 15 domains of ImageNet-C for 10 repeated rounds and compare the performance of existing TTA and CTTA methods.
% Compared to the second-best method, DPCore, our approach reduces the error rate by \textbf{5.1\%} in the first round. Notably, in the 10th round where previously seen domains reappear, our method achieves 34.5\% on mean error rate reduction, yielding an overall improvement of \textbf{9.9\%}.
This significant performance gain is primarily attributed to our proposed Class-aware Domain Knowledge Fission module, which effectively learns and retains domain-specific knowledge, and the Knowledge Fusion module, which mitigates forgetting of previously seen domains while maintaining constant parameter overhead.

\begin{table}[]
\small
    \caption{Classification error rate(\%) for ImageNet-to-ImageNet-C online CTTA task in 10 repeated rounds (R1-R10).}
    \label{5000_imagenetc}
    \centering
    % \resizebox{\textwidth}{!}{
    \begin{tabular}{ll|cccccccccc|c}
    \toprule
    % \multicolumn{2}{c}{Part}                   \\
    % \cmidrule(r){1-2}
    Method & \textit{Venue} & R1 & R2 & R3 & R4 & R5 & R6 & R7 & R8 & R9 & R10 & \textbf{Mean} \\
    \midrule
    Source & - & 55.8 & 55.8 & 55.8 & 55.8 & 55.8 & 55.8 & 55.8 & 55.8 & 55.8 & 55.8 & 55.8 \\
    \midrule
    Tent  & \tiny{\textcolor{darkgray}{\textit{\textbf{ICLR
    '21}}}} 
    % & 41.5 & 40.5 & 41.4 & 92.1 & 99.9 & 99.9 & 99.9 & 99.9 & 99.9 & 99.9 & 81.5 \\
    & 51.0 & 50.6 & 51.0 & 53.1 & 67.9 & 89.7 & 99.9 & 99.9 & 99.9 & 99.9 & 76.3 \\
    % POEM   & \tiny{\textcolor{darkgray}{\textit{\textbf{NeurIPS'24}}}} 
    % &  &  &  &  &  &  &  &  &  &  &   - \\
    % \midrule
    CoTTA   & \tiny{\textcolor{darkgray}{\textit{\textbf{CVPR'22}}}}
    & 49.9 & 50.8 & 51.5 & 51.5 & 51.7 & 52.2 & 53.0 & 53.2 & 53.3 & 53.5 & 52.1 \\
    % EATTA  &  &  &  &  &  &  &  &  &  &  &  &   - \\
    % C-MAE  & CVPR2024 &  &  &  &  &  &  &  &  &  &  &   - \\
    ViDA   & \tiny{\textcolor{darkgray}{\textit{\textbf{ICLR
    '24}}}}
    & 43.5 & 42.7 & 42.5 & 42.4 & 42.4 & 42.3 & 42.3 & 42.3 & 42.2 & 42.3 & 42.5 \\
    CoLA & \tiny{\textcolor{darkgray}{\textit{\textbf{NeurIPS
    '24}}}} 
    & 40.6 & \underline{39.9} & \underline{38.8} & \underline{38.8} & \underline{38.8} & \underline{38.4} & \underline{38.0} & \underline{38.8} & \underline{38.0} & \underline{38.8} & \underline{38.9} \\
    DPCore  & \tiny{\textcolor{darkgray}{\textit{\textbf{ICML
    '25}}}} 
    & \underline{39.9} & 41.2 & 43.2 & 44.2 & 44.8 & 45.4 & 45.9 & 45.7 & 46.3 & 46.8 & 44.4 \\
    Ours   &  - 
    & \textbf{34.8} & \textbf{34.6} & \textbf{34.6} & \textbf{34.6} & \textbf{34.3} & \textbf{34.2} & \textbf{34.4} & \textbf{34.4} & \textbf{34.4} & \textbf{34.5} & \textbf{34.5} \\
    \bottomrule
  \end{tabular}
  % }
\end{table}
% \setlength{\columnsep}{1pt}

% \input{tables/5000_imagenetc}

% We tested the anti-forgetting ability of the method by repeating the target domains for multiple rounds, which makes the CTTA task more challenging. Under this setting, our proposed method outperforms other methods by more than 8\% on average, which is a huge improvement. This shows that our method can adapt to new domains well and maintain knowledges from historical knowledges.
% Detailed results can be found at Table.\ref{5000_imagenetc}, which shows our method's robust performance under anti-forgetting scenarios.
% \input{tables/5000_imagenetc}
\subsection{Comparison with SOTA}
\begin{figure}
  \centering
  \includegraphics[width=\textwidth]{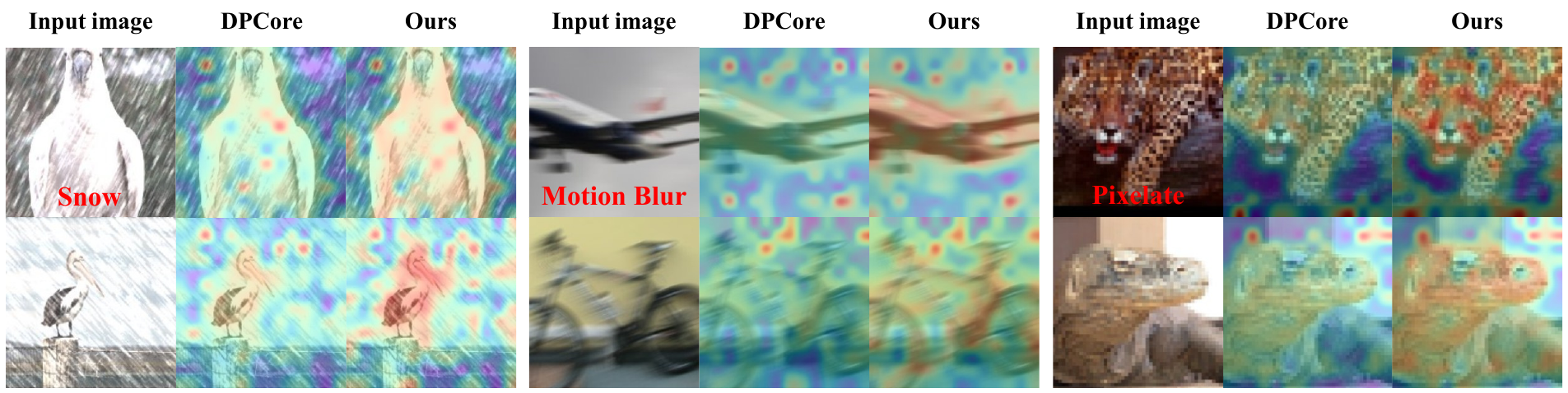}
  \vspace{-15pt}
  \caption{The qualitative analysis of attention map on the ImageNet-C CTTA task. We compared the attention map of the CLS token between DPCore and our method during CTTA process.}
  \label{attn_map}
  \vspace{-10pt}
\end{figure}

% \begin{figure}[htbp]
%     \centering  %图片全局居中
% 	\subfigcapskip=-3pt %设置子图与子标题之间的距离
%     \vspace{-10pt}
%     \subfigure[DPCore]{
%     \includegraphics[width=0.3\linewidth]{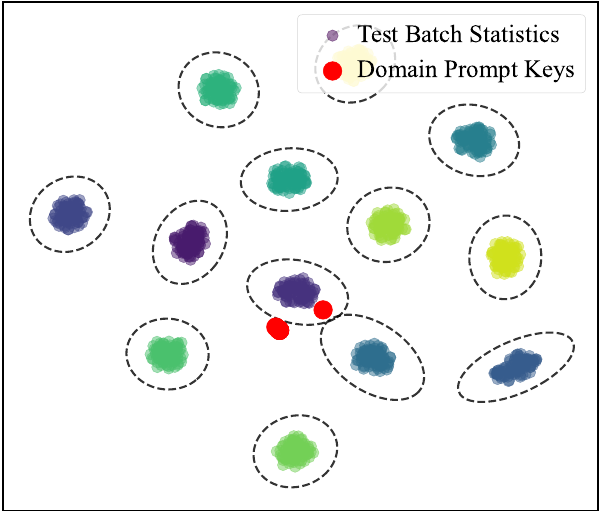}
%     }
%     \subfigure[Ours without KFI \& KFU]{
%     \includegraphics[width=0.3\linewidth]{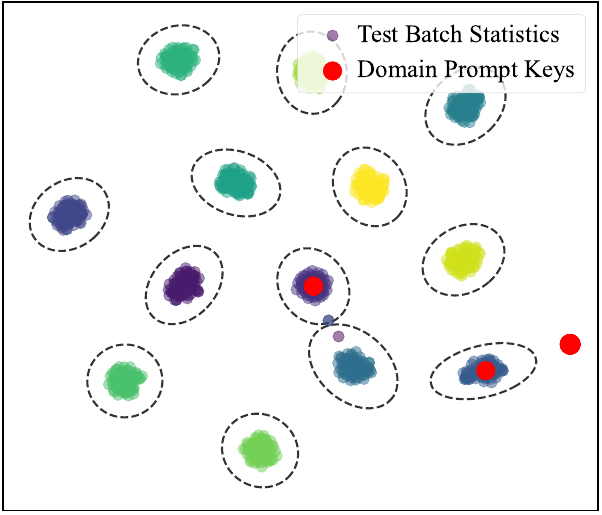}
%     }
%     \subfigure[Ours]{
% 		\includegraphics[width=0.3\linewidth]{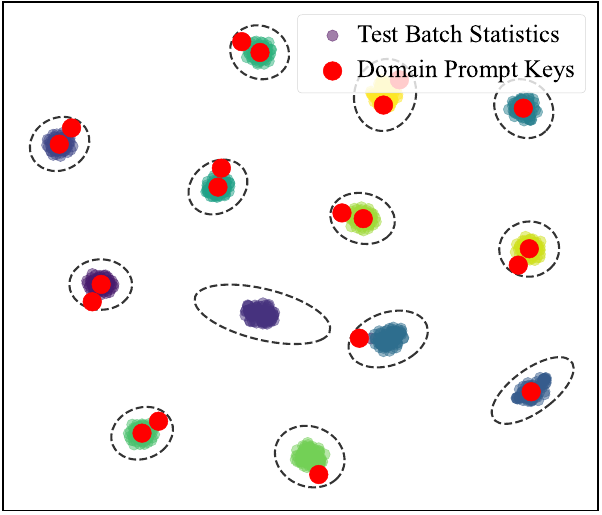}
%     }
%     \vspace{-5pt}
%     \caption{t-SNE analysis across different domains. Differnet colours represent different domains. 
%     Result shows that our methods can assign test batches with correct prompts, compared to that DPCore and method without KFF tend to overgeneralize across domains.
%     }
%     \label{fig:tsne_domain}
%     \vspace{-10pt}
% \end{figure}

\begin{wrapfigure}{r}{0.4\textwidth}
% \begin{table}[]
    \centering
    \vspace{-15pt}
    \includegraphics[width=\linewidth]{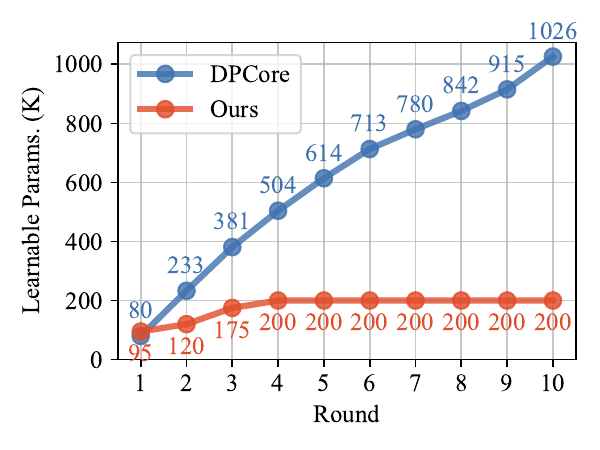}
    \vspace{-15pt}
    \caption{Computational analysis on ImageNet-C under 10 repeated rounds.}
    \vspace{-10pt}
    \label{fig: stage_param}
% \end{table}
\end{wrapfigure}

\paragraph{Computation and Memory Efficiency.}
We analyze the computational complexity across methods in Figure~\ref{fig:Params} by comparing learnable parameters. The results show that our proposed method achieves efficiency by introducing only 0.09M parameters (\textasciitilde0.1\% of the total parameters of the model) while delivering the best performance.
Furthermore, we conduct a comparative analysis of our proposed method and the state-of-the-art, high-efficiency prompt-based approach DPCore in terms of learnable parameters for the CTTA with repeating domains task, as illustrated in Figure~\ref{fig: stage_param}. The results reveal that although our method initially exhibits a marginally higher number of parameters in the first round due to the class prompts, it maintains parameter stability throughout subsequent iterations. In contrast, DPCore experiences a continuous increase in the number of parameters. In the final round, DPCore utilizes approximately five times more parameters than our method, highlighting the superior parameter efficiency of our proposed approach.

\begin{figure}[htbp]


    \centering  %图片全局居中
	\subfigcapskip=-3pt %设置子图与子标题之间的距离
    \vspace{-10pt}
    \subfigure[DPCore]{
    \includegraphics[width=0.3\linewidth]{figure/tsne_dpcore.pdf}
    }
    \subfigure[Ours without KFI \& KFU]{
    \includegraphics[width=0.3\linewidth]{figure/tsne_base.pdf}
    }
    \subfigure[Ours]{
		\includegraphics[width=0.3\linewidth]{figure/tsne_ours.pdf}
    }
    \vspace{-5pt}
    \caption{t-SNE analysis across different domains. Differnet colours represent different domains. 
    Result shows that our methods can assign test batches with correct prompts, compared to that DPCore and method without KFF tend to overgeneralize across domains.
    }
    \label{fig:tsne_domain}
    \vspace{-10pt}
\end{figure}

\paragraph{Visualization and Analysis.}
In Figure~\ref{attn_map}, we present a qualitative analysis of attention maps on the ImageNet-C CTTA task, focusing on the attention patterns of the CLS token between the previous SOTA method DPCore and our proposed method. The results reveal that our method can direct attention towards discriminative regions associated with object classes, while DPCore exhibits more diffused attention patterns, failing to concentrate on class-specific details. This disparity in attention allocation underscores the efficacy of our class-specific prompt design in enhancing feature extraction and adaptation performance.
Furthermore, in Figure~\ref{fig:tsne_domain}, we conduct a t-SNE analysis to visualize the feature distributions of different domains, where distinct colours denote various domain labels. Instead, they converge towards similar feature clusters, indicating a tendency to overgeneralize across domains. These findings highlight the critical role of our method’s prompt assignment strategy and KFF in enhancing domain discrimination during continual test-time adaptation. A comprehensive theoretical analysis of this phenomenon, including 
a simple assumption and mathematical proofs, is presented in Section~\ref{section:proof}, further validating the effectiveness of our proposed approach.
% Table~\ref{component} evaluate the contributions of class prompts(\(\mathcal{P}_c\)), domain prompts(\(\mathcal{P}_d\)), Knowledge Fission(KFI) and Knowledge Fusion(KFU) on ImageNet-to-ImageNet-C CTTA task. 
% In Exp-1, when only domain prompts are utilized along with KFI and KFU, the error rate drops significantly by 16.3\% compared to the source model, reaching 39.5\%. However, it still lags behind our proposed method by 4.7\% in terms of error rate. Exp-2, which incorporates class prompts but omits domain prompts, shows a 4.9\% decrease in error rate relative to the pre-trained Source model, with an error rate of 50.9\%. In Exp-3, both \(\mathcal{P}_c\) and \(\mathcal{P}_d\) are used but without KFI and KFU, suffers a notable performance

% \input{tables/main_cifar10c}
\subsection{Ablation Study}

\begin{wraptable}{r}{0.5\textwidth}
% \begin{table}[]
    \vspace{-10pt}
    \centering
    \caption{Effect of each components. Average error rate(\%) for ImageNet-to-ImageNet-C CTTA task}
    \begin{tabular}{c|cc|cc|c}
    \toprule
       Base & $\mathcal{P}_c$ & $\mathcal{P}_d$ & KFI & KFU & Mean \\
        \midrule
       \checkmark & - & - & - & - &55.8 \\
       \checkmark & - & \checkmark & \checkmark & \checkmark & 39.5 \\
       \checkmark & \checkmark & - & \checkmark & \checkmark & 50.9 \\
       \checkmark & \checkmark & \checkmark & - & - & 62.9 \\
       \checkmark & \checkmark & \checkmark & \checkmark & - & 36.9 \\
       \checkmark & \checkmark & \checkmark  & \checkmark & \checkmark & \textbf{34.8} \\
    \bottomrule
    \end{tabular}
    \label{component}
% \end{table}
\end{wraptable}
\paragraph{Effect of Each Component.}
Table~\ref{component} evaluate the contributions of class prompts(\(\mathcal{P}_c\)), domain prompts(\(\mathcal{P}_d\)), Knowledge Fission(KFI) and Knowledge Fusion(KFU) on ImageNet-to-ImageNet-C CTTA task. 
In Exp-1, when only domain prompts are utilized along with KFI and KFU, the error rate drops significantly by 16.3\% compared to the source model, reaching 39.5\%. However, it still lags behind our proposed method by 4.7\% in terms of error rate. Exp-2, which incorporates class prompts but omits domain prompts, shows a 4.9\% decrease in error rate relative to the pre-trained Source model, with an error rate of 50.9\%. In Exp-3, both \(\mathcal{P}_c\) and \(\mathcal{P}_d\) are used but without KFI and KFU, suffers a notable performance decline, with an error rate increasing to 62.9\%, highlighting the critical role of KFI. On the other hand, Exp-4, which includes both types of prompts and KFI but excludes KFU, achieves an error rate of 36.9\%.
Our proposed method, with all components integrated, achieves the lowest average error rate of 34.8\%. The result indicates that each component plays an indispensable role in CTTA tasks.

\begin{wrapfigure}{r}{0.6\textwidth}
    \vspace{-10pt}
    \centering
    \includegraphics[width=\linewidth]{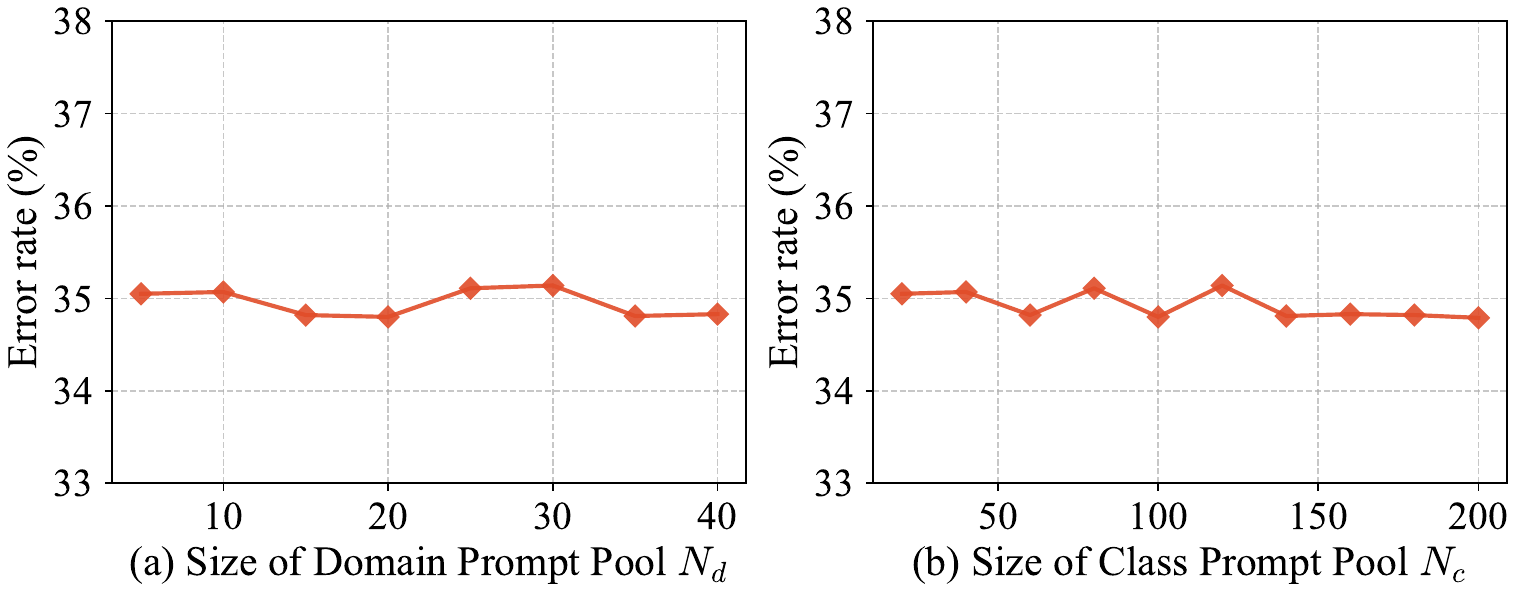}
    \caption{Ablation Study on ImageNet-to-ImageNet-C online CTTA task.}
    \vspace{-10pt}
    \label{fig:ablation}
\end{wrapfigure}

% \begin{figure}
%   \centering
%   \includegraphics[width=0.6\textwidth]{figure/ablation-2.pdf}
%   % \vspace{-15pt}
%   \caption{Ablation Study on ImageNet-to-ImageNet-C online CTTA task.}
%  \label{fig:ablation}
%   \vspace{-10pt}
% \end{figure}

\paragraph{Influence of Hyper-parameters.}
The size of domain prompt pool \(N_d\) and the size of class prompt pool \(N_c\) are two important hyper-parameters in our KFF. Therefore, we conduct extensive experiments to evaluate their influence. 
As illustrated in Figure~\ref{fig:ablation}, both prompt pool sizes have little impact on performance within a reasonable range, showing a stable error rate change. This is because our KFF has comprehensively described the trend of domain and class changes across the CTTA process through knowledge fission and fusion, while excessive prompts inevitably introduce unnecessary parameter redundancy. Therefore, to balance effectiveness and efficiency, our KFF sets \(N_d\) and \(N_c\) to 20 and 100, respectively. We provide further discussion about the hyper-parameters in the Appendix Section~\ref{paragraph: more_ablation}.

\section{Conclusion}
In this paper, we tackle the twin challenges of catastrophic forgetting and inadequate assimilation of new knowledge in Continual Test‑Time Adaptation (CTTA), where models must adapt to unknown, shifting domains without prior access to downstream data. We introduce KFF, a class‑aware Knowledge Fusion and Fission framework: the Fission module isolates discriminative, domain‑specific prompts to block interference from dissimilar historical domains, while the Fusion module greedily merges new knowledge back into the existing pool to maintain efficiency. Across multiple CTTA benchmarks, KFF reduces forgetting by up to 30\% and improves new‑domain accuracy by an average of 4.2\%, all with minimal extra computation and storage.

\section*{Acknowledgements}
This work was supported by the National Natural Science Foundation of China (62376011) and the National Key R\&D Program of China (2024YFA1410000).

%%%%%%%%%%%%%%%%%%%%%%%%%%%%%%%%%%%%%%%%%%%%%%%%%%%%%%%%%%%%
% \newpage
\bibliography{ref}

%%%%%%%%%%%%%%%%%%%%%%%%%%%%%%%%%%%%%%%%%%%%%%%%%%%%%%%%%%%%
\newpage
\section*{NeurIPS Paper Checklist}

\begin{enumerate}

\item {\bf Claims}
    \item[] Question: Do the main claims made in the abstract and introduction accurately reflect the paper's contributions and scope?
    \item[] Answer: \answerYes{} % Replace by \answerYes{}, \answerNo{}, or \answerNA{}.
    \item[] Justification: 
    Our abstract and introduction clearly state the claims and contributions of our work.
    % We claimed to have the following contributions:
    % \begin{itemize}
    %     \item We proposed KFF framework for CTTA: detailed information in Section~\ref{section:method} and Section~\ref{section:proof}.
    %     \item We conducted a wide range of experiments to support our method in Section~\ref{section:experiments} and Section~\ref{section:additional}.
    % \end{itemize}
    \item[] Guidelines:
    \begin{itemize}
        \item The answer NA means that the abstract and introduction do not include the claims made in the paper.
        \item The abstract and/or introduction should clearly state the claims made, including the contributions made in the paper and important assumptions and limitations. A No or NA answer to this question will not be perceived well by the reviewers. 
        \item The claims made should match theoretical and experimental results, and reflect how much the results can be expected to generalize to other settings. 
        \item It is fine to include aspirational goals as motivation as long as it is clear that these goals are not attained by the paper. 
    \end{itemize}

\item {\bf Limitations}
    \item[] Question: Does the paper discuss the limitations of the work performed by the authors?
    \item[] Answer: \answerYes{} % Replace by \answerYes{}, \answerNo{}, or \answerNA{}.
    \item[] Justification: We discussed the limitations in Section~\ref{section:limitations}.
    \item[] Guidelines:
    \begin{itemize}
        \item The answer NA means that the paper has no limitation while the answer No means that the paper has limitations, but those are not discussed in the paper. 
        \item The authors are encouraged to create a separate "Limitations" section in their paper.
        \item The paper should point out any strong assumptions and how robust the results are to violations of these assumptions (e.g., independence assumptions, noiseless settings, model well-specification, asymptotic approximations only holding locally). The authors should reflect on how these assumptions might be violated in practice and what the implications would be.
        \item The authors should reflect on the scope of the claims made, e.g., if the approach was only tested on a few datasets or with a few runs. In general, empirical results often depend on implicit assumptions, which should be articulated.
        \item The authors should reflect on the factors that influence the performance of the approach. For example, a facial recognition algorithm may perform poorly when image resolution is low or images are taken in low lighting. Or a speech-to-text system might not be used reliably to provide closed captions for online lectures because it fails to handle technical jargon.
        \item The authors should discuss the computational efficiency of the proposed algorithms and how they scale with dataset size.
        \item If applicable, the authors should discuss possible limitations of their approach to address problems of privacy and fairness.
        \item While the authors might fear that complete honesty about limitations might be used by reviewers as grounds for rejection, a worse outcome might be that reviewers discover limitations that aren't acknowledged in the paper. The authors should use their best judgment and recognize that individual actions in favor of transparency play an important role in developing norms that preserve the integrity of the community. Reviewers will be specifically instructed to not penalize honesty concerning limitations.
    \end{itemize}

\item {\bf Theory assumptions and proofs}
    \item[] Question: For each theoretical result, does the paper provide the full set of assumptions and a complete (and correct) proof?
    \item[] Answer: \answerYes{} % Replace by \answerYes{}, \answerNo{}, or \answerNA{}.
    \item[] Justification: We provide provide the full set of assumptions and a complete (and correct) proof in Section~\ref{section:proof}.
    \item[] Guidelines:
    \begin{itemize}
        \item The answer NA means that the paper does not include theoretical results. 
        \item All the theorems, formulas, and proofs in the paper should be numbered and cross-referenced.
        \item All assumptions should be clearly stated or referenced in the statement of any theorems.
        \item The proofs can either appear in the main paper or the supplemental material, but if they appear in the supplemental material, the authors are encouraged to provide a short proof sketch to provide intuition. 
        \item Inversely, any informal proof provided in the core of the paper should be complemented by formal proofs provided in appendix or supplemental material.
        \item Theorems and Lemmas that the proof relies upon should be properly referenced. 
    \end{itemize}

    \item {\bf Experimental result reproducibility}
    \item[] Question: Does the paper fully disclose all the information needed to reproduce the main experimental results of the paper to the extent that it affects the main claims and/or conclusions of the paper (regardless of whether the code and data are provided or not)?
    \item[] Answer: \answerYes{} % Replace by \answerYes{}, \answerNo{}, or \answerNA{}.
    \item[] Justification: We provide the information needed for reproducing the main results of this paper in Section~\ref{section: experiment-setup}.
    \item[] Guidelines:
    \begin{itemize}
        \item The answer NA means that the paper does not include experiments.
        \item If the paper includes experiments, a No answer to this question will not be perceived well by the reviewers: Making the paper reproducible is important, regardless of whether the code and data are provided or not.
        \item If the contribution is a dataset and/or model, the authors should describe the steps taken to make their results reproducible or verifiable. 
        \item Depending on the contribution, reproducibility can be accomplished in various ways. For example, if the contribution is a novel architecture, describing the architecture fully might suffice, or if the contribution is a specific model and empirical evaluation, it may be necessary to either make it possible for others to replicate the model with the same dataset, or provide access to the model. In general. releasing code and data is often one good way to accomplish this, but reproducibility can also be provided via detailed instructions for how to replicate the results, access to a hosted model (e.g., in the case of a large language model), releasing of a model checkpoint, or other means that are appropriate to the research performed.
        \item While NeurIPS does not require releasing code, the conference does require all submissions to provide some reasonable avenue for reproducibility, which may depend on the nature of the contribution. For example
        \begin{enumerate}
            \item If the contribution is primarily a new algorithm, the paper should make it clear how to reproduce that algorithm.
            \item If the contribution is primarily a new model architecture, the paper should describe the architecture clearly and fully.
            \item If the contribution is a new model (e.g., a large language model), then there should either be a way to access this model for reproducing the results or a way to reproduce the model (e.g., with an open-source dataset or instructions for how to construct the dataset).
            \item We recognize that reproducibility may be tricky in some cases, in which case authors are welcome to describe the particular way they provide for reproducibility. In the case of closed-source models, it may be that access to the model is limited in some way (e.g., to registered users), but it should be possible for other researchers to have some path to reproducing or verifying the results.
        \end{enumerate}
    \end{itemize}

\item {\bf Open access to data and code}
    \item[] Question: Does the paper provide open access to the data and code, with sufficient instructions to faithfully reproduce the main experimental results, as described in supplemental material?
    \item[] Answer: %\answerTODO{} % Replace by \answerYes{}, \answerNo{}, or \answerNA{}.
    %\answerTODO{}
    \answerNo{}
    \item[] Justification: 
    % \justificationTODO{}
    % todo ?
     The code will be made publicly available after the paper is accepted. At the same time, all the datasets we used are publicly available and the main experimental results can be reproduced based on Section~\ref{section:method} and Section~\ref{section: experiment-setup}.
    \item[] Guidelines:
    \begin{itemize}
        \item The answer NA means that paper does not include experiments requiring code.
        \item Please see the NeurIPS code and data submission guidelines (\url{https://nips.cc/public/guides/CodeSubmissionPolicy}) for more details.
        \item While we encourage the release of code and data, we understand that this might not be possible, so “No” is an acceptable answer. Papers cannot be rejected simply for not including code, unless this is central to the contribution (e.g., for a new open-source benchmark).
        \item The instructions should contain the exact command and environment needed to run to reproduce the results. See the NeurIPS code and data submission guidelines (\url{https://nips.cc/public/guides/CodeSubmissionPolicy}) for more details.
        \item The authors should provide instructions on data access and preparation, including how to access the raw data, preprocessed data, intermediate data, and generated data, etc.
        \item The authors should provide scripts to reproduce all experimental results for the new proposed method and baselines. If only a subset of experiments are reproducible, they should state which ones are omitted from the script and why.
        \item At submission time, to preserve anonymity, the authors should release anonymized versions (if applicable).
        \item Providing as much information as possible in supplemental material (appended to the paper) is recommended, but including URLs to data and code is permitted.
    \end{itemize}

\item {\bf Experimental setting/details}
    \item[] Question: Does the paper specify all the training and test details (e.g., data splits, hyperparameters, how they were chosen, type of optimizer, etc.) necessary to understand the results?
    \item[] Answer: \answerYes{} % Replace by \answerYes{}, \answerNo{}, or \answerNA{}.
    \item[] Justification: We described the settings in the experiment in Section~\ref{section: experiment-setup}, including the hyperparameters and how they were chosen, as well as the type of optimizer.
    \item[] Guidelines:
    \begin{itemize}
        \item The answer NA means that the paper does not include experiments.
        \item The experimental setting should be presented in the core of the paper to a level of detail that is necessary to appreciate the results and make sense of them.
        \item The full details can be provided either with the code, in appendix, or as supplemental material.
    \end{itemize}

\item {\bf Experiment statistical significance}
    \item[] Question: Does the paper report error bars suitably and correctly defined or other appropriate information about the statistical significance of the experiments?
    \item[] Answer: \answerYes{} % Replace by \answerYes{}, \answerNo{}, or \answerNA{}.
    \item[] Justification: %\justificationTODO{}
    Due to limited resources, we only extensively evaluate the performance of our proposed method on ImageNet-C across 10 independent runs with different random seeds. The detailed values are provided in Section~\ref{section:additional} and Table~\ref{tab:mean-std}.
    Other experiments use fixed random seeds to ensure reproducible results within the same settings.
    \item[] Guidelines:
    \begin{itemize}
        \item The answer NA means that the paper does not include experiments.
        \item The authors should answer "Yes" if the results are accompanied by error bars, confidence intervals, or statistical significance tests, at least for the experiments that support the main claims of the paper.
        \item The factors of variability that the error bars are capturing should be clearly stated (for example, train/test split, initialization, random drawing of some parameter, or overall run with given experimental conditions).
        \item The method for calculating the error bars should be explained (closed form formula, call to a library function, bootstrap, etc.)
        \item The assumptions made should be given (e.g., Normally distributed errors).
        \item It should be clear whether the error bar is the standard deviation or the standard error of the mean.
        \item It is OK to report 1-sigma error bars, but one should state it. The authors should preferably report a 2-sigma error bar than state that they have a 96\% CI, if the hypothesis of Normality of errors is not verified.
        \item For asymmetric distributions, the authors should be careful not to show in tables or figures symmetric error bars that would yield results that are out of range (e.g. negative error rates).
        \item If error bars are reported in tables or plots, The authors should explain in the text how they were calculated and reference the corresponding figures or tables in the text.
    \end{itemize}

\item {\bf Experiments compute resources}
    \item[] Question: For each experiment, does the paper provide sufficient information on the computer resources (type of compute workers, memory, time of execution) needed to reproduce the experiments?
    \item[] Answer: \answerYes{} % Replace by \answerYes{}, \answerNo{}, or \answerNA{}.
    \item[] Justification: We provided the computer resources we used to conduct the experiments in Section~\ref{section: experiment-setup}.
    \item[] Guidelines:
    \begin{itemize}
        \item The answer NA means that the paper does not include experiments.
        \item The paper should indicate the type of compute workers CPU or GPU, internal cluster, or cloud provider, including relevant memory and storage.
        \item The paper should provide the amount of compute required for each of the individual experimental runs as well as estimate the total compute. 
        \item The paper should disclose whether the full research project required more compute than the experiments reported in the paper (e.g., preliminary or failed experiments that didn't make it into the paper). 
    \end{itemize}
    
\item {\bf Code of ethics}
    \item[] Question: Does the research conducted in the paper conform, in every respect, with the NeurIPS Code of Ethics \url{https://neurips.cc/public/EthicsGuidelines}?
    \item[] Answer: \answerYes{} % Replace by \answerYes{}, \answerNo{}, or \answerNA{}.
    \item[] Justification: The research conducted in the paper conform with the NeurIPS Code of Ethics \url{https://neurips.cc/public/EthicsGuidelines}.
    \item[] Guidelines:
    \begin{itemize}
        \item The answer NA means that the authors have not reviewed the NeurIPS Code of Ethics.
        \item If the authors answer No, they should explain the special circumstances that require a deviation from the Code of Ethics.
        \item The authors should make sure to preserve anonymity (e.g., if there is a special consideration due to laws or regulations in their jurisdiction).
    \end{itemize}

\item {\bf Broader impacts}
    \item[] Question: Does the paper discuss both potential positive societal impacts and negative societal impacts of the work performed?
    \item[] Answer: \answerYes{} % Replace by \answerYes{}, \answerNo{}, or \answerNA{}.
    \item[] Justification: We discussed the potential impacts of the work in Section~\ref{sec:broader impact}.
    % Our study focus on general continual test-time adaptation, which is not limited to any particular applications. Thus, there are no significant potential societal consequences of our work that must be highlighted.
    \item[] Guidelines:
    \begin{itemize}
        \item The answer NA means that there is no societal impact of the work performed.
        \item If the authors answer NA or No, they should explain why their work has no societal impact or why the paper does not address societal impact.
        \item Examples of negative societal impacts include potential malicious or unintended uses (e.g., disinformation, generating fake profiles, surveillance), fairness considerations (e.g., deployment of technologies that could make decisions that unfairly impact specific groups), privacy considerations, and security considerations.
        \item The conference expects that many papers will be foundational research and not tied to particular applications, let alone deployments. However, if there is a direct path to any negative applications, the authors should point it out. For example, it is legitimate to point out that an improvement in the quality of generative models could be used to generate deepfakes for disinformation. On the other hand, it is not needed to point out that a generic algorithm for optimizing neural networks could enable people to train models that generate Deepfakes faster.
        \item The authors should consider possible harms that could arise when the technology is being used as intended and functioning correctly, harms that could arise when the technology is being used as intended but gives incorrect results, and harms following from (intentional or unintentional) misuse of the technology.
        \item If there are negative societal impacts, the authors could also discuss possible mitigation strategies (e.g., gated release of models, providing defenses in addition to attacks, mechanisms for monitoring misuse, mechanisms to monitor how a system learns from feedback over time, improving the efficiency and accessibility of ML).
    \end{itemize}
    
\item {\bf Safeguards}
    \item[] Question: Does the paper describe safeguards that have been put in place for responsible release of data or models that have a high risk for misuse (e.g., pretrained language models, image generators, or scraped datasets)?
    \item[] Answer: \answerNA{} % Replace by \answerYes{}, \answerNo{}, or \answerNA{}.
    \item[] Justification: We do not release anything that needs a safeguard.
    \item[] Guidelines:
    \begin{itemize}
        \item The answer NA means that the paper poses no such risks.
        \item Released models that have a high risk for misuse or dual-use should be released with necessary safeguards to allow for controlled use of the model, for example by requiring that users adhere to usage guidelines or restrictions to access the model or implementing safety filters. 
        \item Datasets that have been scraped from the Internet could pose safety risks. The authors should describe how they avoided releasing unsafe images.
        \item We recognize that providing effective safeguards is challenging, and many papers do not require this, but we encourage authors to take this into account and make a best faith effort.
    \end{itemize}

\item {\bf Licenses for existing assets}
    \item[] Question: Are the creators or original owners of assets (e.g., code, data, models), used in the paper, properly credited and are the license and terms of use explicitly mentioned and properly respected?
    \item[] Answer: \answerYes{} % Replace by \answerYes{}, \answerNo{}, or \answerNA{}.
    \item[] Justification: We cited the original paper that we build upon, such as the pre-trained ViT model and the datasets(ImageNet, ImageNet-C, CIFAR100, CIFAR100-C, CIFAR10 and CIFAR10C) we used. The detailed information about licenses is stated in Section~\ref{sec.asset}.
    \item[] Guidelines:
    \begin{itemize}
        \item The answer NA means that the paper does not use existing assets.
        \item The authors should cite the original paper that produced the code package or dataset.
        \item The authors should state which version of the asset is used and, if possible, include a URL.
        \item The name of the license (e.g., CC-BY 4.0) should be included for each asset.
        \item For scraped data from a particular source (e.g., website), the copyright and terms of service of that source should be provided.
        \item If assets are released, the license, copyright information, and terms of use in the package should be provided. For popular datasets, \url{paperswithcode.com/datasets} has curated licenses for some datasets. Their licensing guide can help determine the license of a dataset.
        \item For existing datasets that are re-packaged, both the original license and the license of the derived asset (if it has changed) should be provided.
        \item If this information is not available online, the authors are encouraged to reach out to the asset's creators.
    \end{itemize}

\item {\bf New assets}
    \item[] Question: Are new assets introduced in the paper well documented and is the documentation provided alongside the assets?
    \item[] Answer: \answerNA{} % Replace by \answerYes{}, \answerNo{}, or \answerNA{}.
    \item[] Justification: We do not release new assets.
    \item[] Guidelines:
    \begin{itemize}
        \item The answer NA means that the paper does not release new assets.
        \item Researchers should communicate the details of the dataset/code/model as part of their submissions via structured templates. This includes details about training, license, limitations, etc. 
        \item The paper should discuss whether and how consent was obtained from people whose asset is used.
        \item At submission time, remember to anonymize your assets (if applicable). You can either create an anonymized URL or include an anonymized zip file.
    \end{itemize}

\item {\bf Crowdsourcing and research with human subjects}
    \item[] Question: For crowdsourcing experiments and research with human subjects, does the paper include the full text of instructions given to participants and screenshots, if applicable, as well as details about compensation (if any)? 
    \item[] Answer: \answerNA{} % Replace by \answerYes{}, \answerNo{}, or \answerNA{}.
    \item[] Justification: The paper does not involve crowdsourcing nor research with human subjects.
    \item[] Guidelines:
    \begin{itemize}
        \item The answer NA means that the paper does not involve crowdsourcing nor research with human subjects.
        \item Including this information in the supplemental material is fine, but if the main contribution of the paper involves human subjects, then as much detail as possible should be included in the main paper. 
        \item According to the NeurIPS Code of Ethics, workers involved in data collection, curation, or other labor should be paid at least the minimum wage in the country of the data collector. 
    \end{itemize}

\item {\bf Institutional review board (IRB) approvals or equivalent for research with human subjects}
    \item[] Question: Does the paper describe potential risks incurred by study participants, whether such risks were disclosed to the subjects, and whether Institutional Review Board (IRB) approvals (or an equivalent approval/review based on the requirements of your country or institution) were obtained?
    \item[] Answer: \answerNA{} % Replace by \answerYes{}, \answerNo{}, or \answerNA{}.
    \item[] Justification: The paper does not involve crowdsourcing nor research with human subjects.
    \item[] Guidelines: 
    \begin{itemize}
        \item The answer NA means that the paper does not involve crowdsourcing nor research with human subjects.
        \item Depending on the country in which research is conducted, IRB approval (or equivalent) may be required for any human subjects research. If you obtained IRB approval, you should clearly state this in the paper. 
        \item We recognize that the procedures for this may vary significantly between institutions and locations, and we expect authors to adhere to the NeurIPS Code of Ethics and the guidelines for their institution. 
        \item For initial submissions, do not include any information that would break anonymity (if applicable), such as the institution conducting the review.
    \end{itemize}

\item {\bf Declaration of LLM usage}
    \item[] Question: Does the paper describe the usage of LLMs if it is an important, original, or non-standard component of the core methods in this research? Note that if the LLM is used only for writing, editing, or formatting purposes and does not impact the core methodology, scientific rigorousness, or originality of the research, declaration is not required.
    %this research? 
    \item[] Answer: \answerNA{} % Replace by \answerYes{}, \answerNo{}, or \answerNA{}.
    \item[] Justification: The core method development in this research does not involve LLMs.
    \item[] Guidelines:
    \begin{itemize}
        \item The answer NA means that the core method development in this research does not involve LLMs as any important, original, or non-standard components.
        \item Please refer to our LLM policy (\url{https://neurips.cc/Conferences/2025/LLM}) for what should or should not be described.
    \end{itemize}

\end{enumerate}
%%%%%%%%%%%%%%%%%%%%%%%%%%%%%%%%%%%%%%%%%%%%%%%%%%%%%%%%%%%%

\newpage
\appendix

\section{Theoretical Analysis}
\label{section:proof}
% We proof that Our proposed knowledge fission and fusion can assign all test batches or instances to correct prompt(s).
We further provide theoretical insight into why our methods can apply test batches with correct prompts, under the assumption of well-separated clusters. 

We consider a streaming scenario where test batches \(B_1,\dots,B_t\) arrive sequentially. Each batch \(B_t\) is associated with a feature representation \(\Gamma_t = \{\mu(\phi(B_t)), \sigma(\phi(B_t))\}\), where \(\phi\) is a feature extraction function, \(\mu\) denotes the mean and \(\sigma\) denotes the standard deviation. The distance between two batches is defined as the Euclidean distance between their feature representations:
\[
d(B_i, B_j) = d(\Gamma_i, \Gamma_j) = \left \| \Gamma_i - \Gamma_j \right \|_2
\]
\begin{assumption}
We assume that these batches can be naturally partitioned into \(N\) well-separated clusters \(\{C_i\}_{i=1}^N\) based on their distances. Formally:
% We define the inside distance \(d(C_i)\) as:
% \[
% d(C_i) = \max_{B,B^{\prime} \in C_i}d(B,B^{\prime})
% \]
% and the distance between clusters \(d(C_i, C_j)\) as:
% \[
% d(C_i, C_j) = \min_{B\in C_i, B^{\prime}\in C_j}d(B,B^{\prime})
% \]
% For test time \(1,2,3,\dots,t\), we denote by \(C_i^t\) the cluster \(i\) at test time \(t\) and by \(\{\Gamma_i^t, \mathcal{P}_t^i\}\) the i-th pair of prompts in the prompt pool.

\begin{definition}[Well-Separated Clusters]
A clustering \(\{C_i\}_{i=1}^N\) is well-separated if there exists a threshold \(\theta > 0\) such that
\[
\forall i \neq j, \max_{B,B' \in C_i} d(B,B') < \theta < \min_{B \in C_i, B' \in C_j} d(B,B')
\]
\end{definition}
\end{assumption}
This implies that intra-cluster distances are uniformly smaller than inter-cluster distances. We set our hyper-parameters such that \(\gamma_d < \theta\) and \(N_d > N\), where \(\gamma_d\) is the distance threshold for prompt matching and \(N_d\) is the maximum number of prompts allowed in the pool.

\begin{lemma}
\label{lemma1}
Under the Knowledge Fission mechanism, our method correctly assigns all batches \(B_t\) to prompts from the same cluster.
\end{lemma}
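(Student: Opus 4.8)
The plan is to argue by induction on the batch index $t$, carrying an invariant strong enough to rule out cross-cluster assignments at every step. Concretely, I would maintain the \emph{cluster-purity} invariant: at all times, every domain prompt key $\Gamma_i$ currently in the pool can be written as a convex combination $\Gamma_i=\sum_m \lambda_m \Gamma_{B_m}$, with $\lambda_m\ge 0$, $\sum_m\lambda_m=1$, of the statistics $\Gamma_{B_m}$ of previously seen batches $B_m$ that \emph{all} lie in a single cluster $C_{k(i)}$. This is the natural bookkeeping device, because the domain fusion update $\Gamma_i^*\leftarrow \alpha_d w_i\,\Gamma^{T}_j+(1-\alpha_d w_i)\Gamma_i$ is exactly a convex update, so a key never leaves the convex hull of the batches that have contributed to it.

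For the base case the pool is empty, so the matching rule selects nothing and the mechanism fissions a fresh prompt whose key equals $\Gamma_{B_1}$; this is trivially pure and the assignment is in-cluster. For the inductive step I would let $B_t\in C_k$ arrive and split into the cases dictated by the fission rule (select all prompts with $d_i=\lVert\Gamma_{B_t}-\Gamma_i\rVert_2<\gamma_d$, else fission). The easy half is the \emph{favorable} direction: any pure key $\Gamma_i$ with home cluster $C_k$ is a convex combination of points of $C_k$, so by the triangle inequality $d(\Gamma_{B_t},\Gamma_i)\le\sum_m\lambda_m\,d(\Gamma_{B_t},\Gamma_{B_m})<\theta$ using $\max_{B,B'\in C_k}d(B,B')<\theta$; hence same-cluster keys are the only candidates that can fall below a threshold, and matching one of them (weighted composition) is an in-cluster assignment. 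If no key lies within $\gamma_d$, the mechanism fissions a new pure prompt for $B_t$, again in-cluster. In either case the invariant is preserved, since a match only updates same-cluster keys and a fission adds a singleton-convex-combination key.

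The crux — and the step I expect to be the main obstacle — is the \emph{adversarial} direction: I must show that a key $\Gamma_i$ whose home cluster $C_{k'}\ne C_k$ is never selected, i.e.\ $d(\Gamma_{B_t},\Gamma_i)\ge\gamma_d$. For the original batch statistics this is immediate from well-separation ($d(\Gamma_{B_t},\Gamma_{B_m})>\theta>\gamma_d$ for $B_m\in C_{k'}$), but $\Gamma_i$ is a \emph{convex combination} of such points, and convex combinations of a cluster can sit strictly closer to an external point than any of its members (a midpoint can cut the distance by up to a factor $1/\sqrt2$). The task is therefore to lower-bound $d\big(\Gamma_{B_t},\mathrm{conv}(C_{k'})\big)$, not merely the distances to the extreme points. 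I would do this by combining $\mathrm{diam}(C_{k'})<\theta$ with the uniform cross-cluster gap $>\theta$: bounding the radius of the enclosing ball of $C_{k'}$ via a Jung-type estimate and using the triangle inequality to transfer the $>\theta$ separation from the vertices to the whole hull, yielding a strictly positive margin that $\gamma_d<\theta$ must respect. Making this margin clean is exactly where the bare hypothesis $\gamma_d<\theta$ is delicate and may require the sharper constant hidden in the separation; this is the heart of the argument.

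Finally, to close the induction I would verify that the pool-capacity handling cannot break purity. Since $N_d>N$, whenever the pool is full the pigeonhole principle forces two keys to share a home cluster, and those two keys both lie in a hull of diameter $<\theta$, so their distance is $<\theta$; combined with the inter-cluster margin from the previous paragraph, the globally closest pair selected for fusion is guaranteed to be intra-cluster, so merging preserves cluster-purity and never fabricates a cross-cluster key. With the invariant thus maintained across arrival, matching, fission, and fusion, every batch $B_t$ is assigned only to prompts sharing its cluster, proving Lemma~\ref{lemma1}.
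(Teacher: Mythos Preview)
Your inductive skeleton and match-versus-fission case split are exactly the paper's argument. Where you depart is in rigor: you explicitly track prompt keys as convex combinations of the contributing batch statistics and ask whether the well-separation hypothesis survives passage to the convex hull. The paper does not do this: its inductive step simply asserts that since ``both $\Gamma_i$ and $\Gamma(B_t)$ belong to the same cluster, $\Gamma_i^*$ also belongs to the same cluster,'' and then invokes $d(C_i)<\theta<d(C_i,C_j)$ together with $\gamma_d<\theta$ as if the updated key were a literal batch member to which the separation assumption applies verbatim.

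Your worry about the adversarial direction is a genuine issue, and your proposed Jung-type patch cannot close it under the bare hypothesis $\gamma_d<\theta$. A two-point configuration already exhibits the slack: take $C_{k'}=\{(-a,0),(a,0)\}$ with $2a<\theta$ and a singleton cluster $C_k=\{p\}$ with $p=(0,h)$ and $\sqrt{a^2+h^2}>\theta$; the midpoint of $C_{k'}$ sits at distance $h$ from $p$, and $h$ can be taken as small as $\theta\sqrt{3}/2$ in the limit $a\to\theta/2$. Thus for $\gamma_d\in(\theta\sqrt{3}/2,\theta)$ a cross-cluster match is not ruled out by the stated assumption. You yourself flag that a ``sharper constant'' may be needed; that is precisely the gap, and it is one the paper's proof shares but does not acknowledge. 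A clean repair is to strengthen Definition~A.1 so that well-separation holds for the convex hulls of the clusters (equivalently, bound $\gamma_d$ by the inter-hull gap rather than the inter-point gap), which is what the paper's argument is tacitly using.

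Two minor points. First, your ``favorable direction'' bound $d(\Gamma_{B_t},\Gamma_i)<\theta$ for same-cluster keys does not by itself yield the conclusion that ``same-cluster keys are the only candidates that can fall below a threshold''; that exclusion requires the adversarial direction, which you treat afterward, so the sentence as written is a non-sequitur even though the overall logic is recoverable. Second, your closing paragraph on nearest-pair fusion and the pigeonhole use of $N_d>N$ is the content of the paper's Lemma~2, which the paper proves separately rather than inside Lemma~1.
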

\begin{proof}
We proceed by induction on the test time \(t\):\\
\textbf{Base Case \((t=1)\):}
At test time \(t_1\), when there is no prompts in the prompt pool, We initialize the first prompt with \(\Gamma_1 = \Gamma(B_1)\), which trivially belongs to the same cluster as \(B_1\).\\
\textbf{Inductive Step:}
Suppose our algorithm assigns batches \(B_1, B_2, \dots, B_{t-1}\) with correct prompts. Since each prompt is updated by \(\Gamma_i^* \leftarrow (1-\alpha)\Gamma_i + \alpha\Gamma(B_t)\), for the reason that both \(\Gamma_i\) and \(\Gamma(B_t)\) belongs to the same cluster, \(\Gamma_i^*\) also belongs to the same cluster.
Then, assume that a new batch \(B_t\) belongs to cluster \(j\), we consider two cases based on whether \(B_t\) matches a prompt: (1) \(B_t\) matches one or more prompts. For the reason that \(d(C_i) < \theta < d(C_i,C_j)\) and \(\gamma_d < \theta\), the matched prompt(s) should be in the same cluster with \(B_t\). (2) \(B_t\) do not match any prompt. Then \(B_t\) creates a new prompt \(\Gamma_{new} = \Gamma(B_t)\) in cluster \(j\). This new prompt, where \(\Gamma_{new} = \Gamma(B_t)\), lies exactly in the same cluster as \(B_t\). Thus, all batches will match the correct prompt(s) by induction.
\end{proof}

\begin{lemma}
\label{lemma2}
Our proposed method fuses prompts from the same clusters with Knowledge Fusion.
\end{lemma}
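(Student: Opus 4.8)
The plan is to combine Lemma~\ref{lemma1} with a pigeonhole count and a preservation argument for the well-separated structure. Since the Domain Knowledge Fusion module merges exactly the pair of prompts that realizes the minimum Euclidean distance $d(\Gamma_i,\Gamma_j)$, it suffices to show that this minimizing pair is always contained in a single cluster. First I would invoke Lemma~\ref{lemma1}: at the moment fusion is triggered, every prompt key $\Gamma_i$ in the pool is correctly associated with exactly one cluster $C_k$, in the sense that $\Gamma_i$ is a convex combination of batch statistics $\Gamma(B)$ with $B$ drawn from a single $C_k$. Fusion fires only when the pool is full, i.e. holds $N_d$ prompts; since we assumed $N_d > N$ but there are only $N$ clusters, the pigeonhole principle guarantees that at least two prompts share a cluster.

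Next I would certify that the update rule, a convex combination $\Gamma_i^* \leftarrow (1-\lambda)\Gamma_i + \lambda\,\Gamma^T_j$ with $\lambda = \alpha_d w_i \in [0,1]$, preserves separation. The intra-cluster bound is immediate from the triangle inequality: if $\Gamma_i$ and $\Gamma^T_j$ both lie within $\theta$ of every member of $C_k$, then so does their convex combination, so the set of same-cluster prompt keys keeps diameter below $\theta$. Hence the two co-clustered prompts produced by the pigeonhole step are at distance strictly less than $\theta$, meaning the global minimum of $d(\cdot,\cdot)$ over the pool is $<\theta$. I would then argue that every \emph{inter}-cluster prompt pair has distance exceeding $\theta$; given this, no cross-cluster pair can attain the global minimum, so the fused pair is necessarily intra-cluster. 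Finally, since merging two same-cluster keys yields another point in that cluster's convex hull, an induction over successive fusion events keeps the invariant of Lemma~\ref{lemma1} intact and closes the argument.

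The main obstacle is the inter-cluster lower bound. The Well-Separated Clusters definition constrains only the \emph{original} batch statistics, whereas the update and fusion operations act on convex combinations, and in general the convex hulls of two finite point sets can come strictly closer than the sets themselves. To push this through I expect to need the two-sided structure of the definition, namely that each cluster has diameter below $\theta$ while its minimum separation from any other cluster exceeds $\theta$, combined with a supporting-hyperplane (or Jung-type bounding-ball) estimate showing that no point of $\mathrm{conv}(C_p)$ can drift within $\theta$ of $\mathrm{conv}(C_q)$ for $p \neq q$. The pigeonhole and intra-cluster pieces are routine triangle-inequality bookkeeping; it is this hull-separation estimate that carries the real weight of the proof.
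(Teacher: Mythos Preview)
Your decomposition—invoke Lemma~\ref{lemma1} to place each prompt key in a single cluster, pigeonhole via $N_d>N$ to force two keys into one cluster, then use $d(C_i)<\theta<d(C_i,C_j)$ to conclude the closest pair is intra-cluster and that the merged key stays there—is exactly the paper's argument, which compresses the same three moves into three sentences.

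The convex-hull obstacle you isolate is genuine, but the paper does not engage with it: its proof simply asserts that because the minimum pairwise distance in the pool is below $\theta$, the minimizing pair must share a cluster, implicitly treating convex combinations of cluster points as if they were cluster points themselves. Under the stated definition (which constrains only distances between \emph{original} batch statistics) the hull-separation estimate you hope for can actually fail—e.g.\ $C_1=\{(-3,0),(3,0)\}$, $C_2=\{(0,5.5)\}$, $\theta=6.1$ satisfies the definition yet $\mathrm{dist}(\mathrm{conv}(C_1),C_2)=5.5<\theta$—so the Jung/hyperplane route would require strengthening the assumption rather than the argument. In short, you have reproduced the paper's proof and been more scrupulous about its weak point; the paper operates at the same informal level you would land at if you simply asserted the hull inequality.
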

\begin{proof}
    From Lemma \ref{lemma1}, we can learn that each prompt lies inside a cluster, and as \(N_d > N\), we know that there exists clusters that has more than one prompts when the prompt pool is full. With assumption \(d(C_i) < \theta < d(C_i,C_j)\), the distance between the fused prompts must be less than \(\theta\), which implies that it comes from the same cluster. Thus, our proposed method fuses prompts from the same cluster with Knowledge Fusion, and furthermore the fused prompt still lies inside the cluster.
\end{proof}

\begin{proposition}
    Our proposed method assigns all batches \(B_t\) with correct prompt(s) from the same cluster with Knowledge Fission and Fusion.
\end{proposition}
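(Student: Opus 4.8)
The plan is to obtain the Proposition as a direct synthesis of Lemmas~\ref{lemma1} and~\ref{lemma2}, the only genuine work being to verify that the two mechanisms do not interfere with one another when they are interleaved along the stream. Concretely, I would carry a single combined invariant, call it $(\star)$: at the end of processing each batch $B_t$ — that is, after both the fission/update step \emph{and} the subsequent fusion step — every prompt $\Gamma_i$ currently in the pool lies entirely inside a unique cluster $C_{k(i)}$, and each of $B_1,\dots,B_t$ was matched to (or spawned as) a prompt lying in the cluster that contains it. The Proposition is then just the second clause of $(\star)$ read off for all $t$.

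First I would set up the induction on $t$ for $(\star)$. The base case is immediate from the base case of Lemma~\ref{lemma1}: the first prompt equals $\Gamma(B_1)$, so it lies in the cluster of $B_1$, and no fusion is triggered. For the inductive step I would split the transition at time $t$ into its two sub-operations and treat each with the corresponding lemma. The fission/update sub-step is handled exactly by the inductive argument of Lemma~\ref{lemma1}: since $\gamma_d < \theta$ and every existing prompt lies inside a cluster by hypothesis, a batch $B_t$ in cluster $j$ either matches only prompts of $C_j$ (inter-cluster distances exceed $\theta > \gamma_d$) or matches none and creates a fresh prompt $\Gamma_{new}=\Gamma(B_t)\in C_j$, while the convex update $\Gamma_i^{*}\leftarrow(1-\alpha)\Gamma_i+\alpha\Gamma(B_t)$ keeps the prompt in $C_j$ because both operands lie in $C_j$. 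The fusion sub-step is handled by Lemma~\ref{lemma2}: when the pool is full, $N_d > N$ forces some cluster to contain at least two prompts, so the closest pair (Euclidean distance below $\theta$) must share a cluster, and the merged prompt stays inside it. Both sub-steps preserve $(\star)$, closing the induction.

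The main obstacle, and the point deserving the most care, is precisely this \emph{interleaving}: Lemma~\ref{lemma1} was proved under the tacit assumption that the pool always consists of within-cluster prompts, yet along the stream the pool is repeatedly altered by fusion \emph{before} the next matching occurs, so the correctness of matching at step $t+1$ depends on fusion at step $t$ having preserved cluster membership — which is exactly what Lemma~\ref{lemma2} supplies. I would therefore stress that the two lemmas are not independent facts but must be chained through the shared invariant $(\star)$, and that the hypotheses $\gamma_d<\theta$ and $N_d>N$ are used jointly: the former guarantees that fission never crosses a cluster boundary, while the latter guarantees that every forced fusion has a legitimate intra-cluster pair available to merge. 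Once $(\star)$ holds for all $t$, the Proposition follows immediately.
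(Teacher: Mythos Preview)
Your proposal is correct and follows essentially the same approach as the paper: invoke Lemma~\ref{lemma1} for the fission/update step, Lemma~\ref{lemma2} for the fusion step, and argue that together they maintain the invariant that every prompt in the pool stays inside a single cluster, so each batch is matched within its own cluster. Your treatment of the interleaving via the explicit invariant $(\star)$ is more carefully spelled out than the paper's brief ``the iterative process of knowledge fission and fusion maintains the invariant,'' but the underlying argument is identical.
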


\begin{proof}
By Lemma \ref{lemma1}, we know that the Knowledge Fission mechanism ensures that each batch \(B_t\) is initially assigned to a prompt from its correct cluster. And Lemma \ref{lemma2} shows that the Knowledge Fusion mechanism only merges prompts within the same cluster. Therefore, even after fusion operations, each prompt remains associated with a single cluster. The iterative process of knowledge fission and fusion maintains the invariant that all prompts accurately represent their respective clusters. Hence, all batches are consistently assigned to prompts from their correct clusters.
\end{proof}

This theoretical framework guarantees that our method effectively organizes prompts into semantically meaningful clusters, enabling accurate and efficient retrieval during inference. The experimental results visualized in Figure~\ref{fig:tsne_domain} corroborate this theory, as the clear separation of domains in the t-SNE plot aligns with the hypothesized cluster structure, and meanwhile, our prompt keys are shown to correspond explicitly to individual domains, thus validating the correctness of our theoretical analysis.

\section{Additional Results}
\label{section:additional}
\begin{table}[t]
\small
    \caption{Classification error rate(\%) for ImageNet-to-ImageNet-C online CTTA task across different strategies of selecting class prompts.}
    \label{prompt_selection}
    \centering
    \begin{tabularx}{\textwidth}{
    >{\raggedright\arraybackslash}p{1.5cm}
    >{\centering\arraybackslash}X 
    >{\centering\arraybackslash}X 
    >{\centering\arraybackslash}X 
    >{\centering\arraybackslash}X 
    >{\centering\arraybackslash}X 
    >{\centering\arraybackslash}X 
    >{\centering\arraybackslash}X 
    >{\centering\arraybackslash}X 
    >{\centering\arraybackslash}X 
    >{\centering\arraybackslash}X 
    >{\centering\arraybackslash}X 
    >{\centering\arraybackslash}X 
    >{\centering\arraybackslash}X 
    >{\centering\arraybackslash}X 
    >{\centering\arraybackslash}p{0.5cm} |
    >{\centering\arraybackslash}X 
    }
    \toprule
    Method &
    \rotatebox{90}{Gauss} &
    \rotatebox{90}{Shot} &
    \rotatebox{90}{Impulse} &
    \rotatebox{90}{Defocus} &
    \rotatebox{90}{Glass} &
    \rotatebox{90}{Motion} &
    \rotatebox{90}{Zoom} &
    \rotatebox{90}{Snow} &
    \rotatebox{90}{Frost} &
    \rotatebox{90}{Fog} &
    \rotatebox{90}{Bright} &
    \rotatebox{90}{Contrast} &
    \rotatebox{90}{Elastic} &
    \rotatebox{90}{Pixel} &
    \rotatebox{90}{JPEG} &
    \rotatebox{90}{\textbf{Mean}}\\
    \midrule
    Source &  53.0 & 51.8 & 52.1 & 68.5 & 78.8 & 58.5 & 63.3 & 49.9 & 54.2 & 57.7 & 26.4 & 91.4 & 57.5 & 38.0 & 36.2 & 55.8 \\
    DPCore & 42.2& 38.7& 39.3& 47.2& 51.4& 47.7& 46.9& 39.3& 36.9& 37.4& 22.0& 44.4& 45.1& 30.9& 29.6& 39.9 \\
    % No Match & 40.8 & 36.2 & 37.2 & 44.6 & 47.7 & 38.8 & 38.3 & 32.4 & 30.5 & 32.2 & 20.9 & 39.3 & 34.0 & 26.2 & 27.6 & 35.1 \\
    Hard Match & 39.8 & 36.6 & 36.4 & 45.1 & 46.8 & 42.0 & 40.9 & 33.6 & 30.5 & 28.2 & 20.8 & 38.7 & 39.0 & 30.4 & 27.5 & 35.8 \\
    Top1 Match & 40.9 & 37.0 & 36.2 & 44.5 & 45.4 & 40.2 & 39.3 & 33.6 & 31.0 & 33.4 & 20.9 & 39.4 & 34.7 & 26.9 & 28.1 & 35.4 \\
    Top3 Match & 40.5 & 37.0 & 36.0 & 44.9 & 45.7 & 39.0 & 38.3 & 31.7 & 30.8 & 37.4 & 20.7 &  37.1 & 34.5 & 25.5 & 27.2 & \underline{35.1} \\
    Top5 Match & 40.0 & 37.5 & 37.8 & 46.1 & 48.6 & 40.5 & 39.5 & 32.6 & 30.4 & 27.9 & 21.2 & 38.7 & 38.7 & 30.7 & 28.7 & 35.9 \\
    Ours & 40.1 & 36.5 & 36.0 & 44.5 & 45.6 & 39.1 & 39.1 & 32.2 & 31.0 & 30.0 & 20.9 & 38.3 & 34.9 & 26.3 & 27.4 & \textbf{34.8} \\
    \bottomrule
     \end{tabularx}
\end{table}

\paragraph{Impact of Class Prompt Selection Methods.}
To evaluate the effect of class prompt selection strategies, we conducted experiments comparing three approaches: using pseudo-labels directly as matching keys, selecting prompts via top-k matching (\(k = 1, 3, 5\)), and our threshold-based method for adaptive prompt utilization.
As shown in Table~\ref{prompt_selection}, our threshold-based approach achieves the best performance, outperforming the source model by 21\%. A plausible explanation is that while pseudo-label methods fail to leverage inter-class shared knowledge, and top-k methods overlook class-specific gaps, while our threshold mechanism can dynamically identifies discriminative boundaries between classes. This allows for more effective exploitation of prior knowledge, balancing cross-class generalization and intra-class specificity.
% To address the impact on the ways of selecting proper class prompts, we conduct the following experiments: We tried using pseudo label directly as the key for matching as well as matching with topk promps, with k = 1,3,5. 
% As shown in Table.\ref{prompt_selection}, our method using a threshold to decide whether use the prompt or not performs the best, outperforming the source model with 21\%. A possible reason is that there are shared knowledge between classes, the pseudo method cannot make good use of this property, while there are gaps between different classes, our threshold can better address where the gap is and thus make better use of the previous knowledge compared to the topk methods.

\paragraph{Detailed Results for for CIFAR10/100-to-CIFAR10/100C.}
We present the detailed results for CIFAR100-to-CIFAR100C and CIFAR10-to-CIFAR100C online CTTA tasks in Table~\ref{main_cifar100c} and Table~\ref{main_cifar10c}, respectively. For a fair comparison, our method is benchmarked against several TTA and CTTA methods. Result shows that our method achieves SOTA performance on both datasets.

\paragraph{Prompt Functionality Validation via Domain-Class Information Separation.}
\begin{wrapfigure}{r}{0.5\textwidth}
    \centering
    \vspace{-10pt}
    \includegraphics[width=0.5\textwidth]{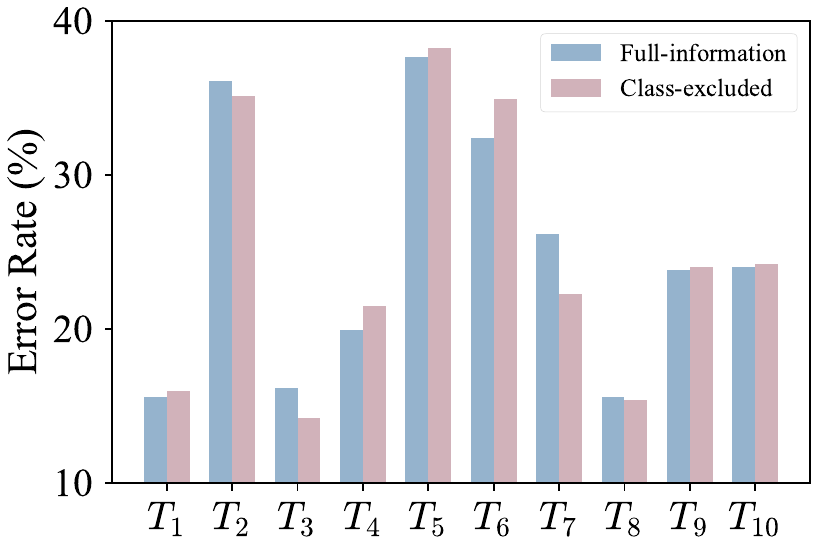}
    \vspace{-10pt}
    \caption{Error rate between the full-information and class-excluded settings on ImageNet-C across 10 independent tests (\(T_1\)–\(T_{10}\)).}
    \vspace{-10pt}
    \label{fig:decouple}
\end{wrapfigure}
To validate the distinct roles of domain prompts and class prompts, we designed an experiment using three sequentially varying domains, where the test set comprised the first 100 classes of the initial domain. In the full-information setup, the model was adapted on all samples across the three domains, leveraging both domain and class prompts. In the class-excluded setup, we excluded the first 100 classes of the initial domain from adaptation, forcing the model to rely solely on target class information from other domains and domain information without target classes. Across 10 independent tests (\(T_1\)–\(T_{10}\)), as shown in Figure~\ref{fig:decouple}, the class-excluded setup exhibited only a 0.2\% average performance degradation compared to the full-information setup, which is significantly smaller than the 4.7\% and 16.1\% degradations observed when isolating one of the prompt components (Table~\ref{component}). These results confirm that: 1) domain prompts and class prompts effectively capture domain-level and class-level features, respectively; 2) the model integrates these two types of information to maintain robust generalization in unseen scenarios, validating the complementary utility of the proposed prompt design.
% To confirm the two kinds of prompt, domain prompt and class prompt, each playing their own role, we conduct a decoupling experiment. Specifically, we trained our model with randomly chosen three continuous changing domains and tested it with samples of first 100 classes in the first domain. In the normal setting, the model was trained with all the samples in the three domains while the model was trained without the samples of first 100 classes in the first domain in the decoupled setting. As shown in Figure.\ref{fig:decouple} we conducted the test for 10 times, denoted as \(T_1, T_2, \dots, T_{10}\), and it turns out that the decoupled one only has an average degradation of 0.2\% compared to the normal one, which is much more smaller than the average degradation 4.7\% and 16.1\% listed in Table.\ref{component}. This shows that the two kinds of prompt can work together to predict an unseen situation.

\begin{table}[t]
\small
    \caption{Classification error rate (\%) for CIFAR100-to-CIFAR100-C online CTTA task, evaluated on ViT-Base backbone with corruption severity level 5.}
    \label{main_cifar100c}
    \centering
    % \begin{tabular}{ll|*{16}{>{\centering\arraybackslash}m{1.5em}}|c}
    % \begin{tabular}{ll|ccc|cccc|cccc|cccc|c}
    \begin{tabularx}{\textwidth}{@{}
    >{\raggedright\arraybackslash}p{1.1cm} @{}
    >{\centering\arraybackslash}p{0.7cm}  
    >{\centering\arraybackslash}X 
    >{\centering\arraybackslash}X 
    >{\centering\arraybackslash}X 
    >{\centering\arraybackslash}X 
    >{\centering\arraybackslash}X 
    >{\centering\arraybackslash}X 
    >{\centering\arraybackslash}X 
    >{\centering\arraybackslash}X 
    >{\centering\arraybackslash}X 
    >{\centering\arraybackslash}X 
    >{\centering\arraybackslash}X 
    >{\centering\arraybackslash}X 
    >{\centering\arraybackslash}X 
    >{\centering\arraybackslash}X 
    >{\centering\arraybackslash}p{0.5cm} | 
    >{\centering\arraybackslash}X 
    }
    \toprule
    \vspace{-3.2em} \makecell[c]{\rotatebox{90}{Method}}
    % \rotatebox{90}{Method}
    % Methods
    & 
    % \tiny{\textit{Venue}}
    % \textit{Venue}
    \rotatebox{90}{\textit{Venue}}
    &
    \rotatebox{90}{Gauss} &
    \rotatebox{90}{Shot} &
    \rotatebox{90}{Impulse} &
    \rotatebox{90}{Defocus} &
    \rotatebox{90}{Glass} &
    \rotatebox{90}{Motion} &
    \rotatebox{90}{Zoom} &
    \rotatebox{90}{Snow} &
    \rotatebox{90}{Frost} &
    \rotatebox{90}{Fog} &
    \rotatebox{90}{Bright} &
    \rotatebox{90}{Contrast} &
    \rotatebox{90}{Elastic} &
    \rotatebox{90}{Pixel} &
    \rotatebox{90}{JPEG} &
    \rotatebox{90}{\textbf{Mean}}\\
    \midrule
    Source & - & 55.0 & 51.5& 26.9& 24.0& 60.5& 29.0& 21.4& 21.1& 25.0& 35.2& 11.8& 34.8& 43.2& 56.0& 35.9& 35.4 \\
    \midrule
    Tent  & \tiny{\textit{ICLR'21}} & 53.0& 47.0& 24.6& 22.3& 58.5 &26.5& 19.0&21.0&23.0&30.1 &11.8 &25.2& 39.0& 47.1& 33.3& 32.1 \\
    % POEM  & \tiny{\textit{NeurIPS'24}} &  &  &  &  &  &  &  &  &  &  &  &  &  &  &  & - \\
    % \midrule
    CoTTA & \tiny{\textit{CVPR'22}} & 55.0& 51.3& 25.8& 24.1& 59.2& 28.9& 21.4& 21.0& 24.7& 34.9& 11.7& 31.7& 40.4& 55.7& 35.6& 34.8 \\
    VDP   & \tiny{\textit{AAAI'23}} & 54.8& 51.2& 25.6& 24.2& 59.1& 28.8& 21.2& 20.5& 23.3& 33.8& \textbf{7.5}& \textbf{11.7}& 32.0& 51.7& 35.2& 32.0 \\
    C-MAE & \tiny{\textit{CVPR'24}} & 48.6 & \underline{30.7} & \underline{18.5} & 21.3 & \underline{38.4} & 22.2 & 17.5 & \underline{19.3} & 18.0 & 24.8 & 13.1 & 27.8 & 31.4 & 35.5 & 29.5 & 26.4 \\
    ViDA  & \tiny{\textit{ICLR'24}} & 50.1& 40.7& 22.0& 21.2& 45.2& 21.6& 16.5& \textbf{17.9}& \underline{16.6}& 25.6& 11.5& 29.0& 29.6& 34.7& \underline{27.1}& 27.3 \\
    % PeTTA & \tiny{\textit{NeurIPS'24}} &  &  &  &  &  &  &  &  &  &  &  &  &  &  &  & - \\
    DPCore & \tiny{\textit{ICML'25}} & \underline{48.2} & 40.2& 21.3& \textbf{20.2}& 44.1& \underline{21.1}& \textbf{16.2}& 18.1& \textbf{15.2}& \underline{22.3} & \underline{9.4}& \underline{13.2}& \underline{28.6} & \underline{32.8} & \textbf{25.5}& \underline{25.1} \\
    Ours & \tiny{\textit{-}} & \textbf{31.2} & \textbf{28.1} & \textbf{15.1} & \underline{20.5} & \textbf{36.0} & \textbf{20.4} & \underline{16.3} & 19.4 & 17.7 & \textbf{20.2} & 10.6 & 16.8 & \textbf{27.9} & \textbf{27.1} & 29.6 & \textbf{22.5} \\
    \bottomrule
     \end{tabularx}
\end{table}
\begin{table}[t]
\small
    \caption{Classification error rate (\%) for CIFAR10-to-CIFAR10-C}
    \label{main_cifar10c}
    \centering
    % \begin{tabular}{ll|*{16}{>{\centering\arraybackslash}m{1.5em}}|c}
    % \begin{tabular}{ll|ccc|cccc|cccc|cccc|c}
    \begin{tabularx}{\textwidth}{@{}
    >{\raggedright\arraybackslash}p{1.1cm} @{}
    >{\centering\arraybackslash}p{0.7cm}  
    >{\centering\arraybackslash}X 
    >{\centering\arraybackslash}X 
    >{\centering\arraybackslash}X 
    >{\centering\arraybackslash}X 
    >{\centering\arraybackslash}X 
    >{\centering\arraybackslash}X 
    >{\centering\arraybackslash}X 
    >{\centering\arraybackslash}X 
    >{\centering\arraybackslash}X 
    >{\centering\arraybackslash}X 
    >{\centering\arraybackslash}X 
    >{\centering\arraybackslash}X 
    >{\centering\arraybackslash}X 
    >{\centering\arraybackslash}X 
    >{\centering\arraybackslash}p{0.5cm} | 
    >{\centering\arraybackslash}X 
    }
    \toprule
    \vspace{-3.2em} \makecell[c]{\rotatebox{90}{Method}}
    % \rotatebox{90}{Method}
    % Methods
    & 
    % \tiny{\textit{Venue}}
    % \textit{Venue}
    \rotatebox{90}{\textit{Venue}}
    &
    \rotatebox{90}{Gauss} &
    \rotatebox{90}{Shot} &
    \rotatebox{90}{Impulse} &
    \rotatebox{90}{Defocus} &
    \rotatebox{90}{Glass} &
    \rotatebox{90}{Motion} &
    \rotatebox{90}{Zoom} &
    \rotatebox{90}{Snow} &
    \rotatebox{90}{Frost} &
    \rotatebox{90}{Fog} &
    \rotatebox{90}{Bright} &
    \rotatebox{90}{Contrast} &
    \rotatebox{90}{Elastic} &
    \rotatebox{90}{Pixel} &
    \rotatebox{90}{JPEG} &
    \rotatebox{90}{\textbf{Mean}}\\
    \midrule
    Source & - & 60.1 & 53.2 & 38.3 & 19.9 & 35.5 & 22.6 & 18.6 & 12.1 & 12.7 & 22.8 & 5.3 & 49.7 & 23.6 & 24.7 & 23.1 & 28.2 \\
    \midrule
    Tent  & \tiny{\textit{ICLR'21}} & 57.7 & 56.3 & 29.4 & 16.2 & 35.3 & 16.2 & 12.4 & 11.0 & 11.6 & 14.9 & 4.7 & 22.5 & 15.9 & 29.1 & 19.5 & 23.5 \\
    % POEM  & \tiny{\textit{NeurIPS'24}} &  &  &  &  &  &  &  &  &  &  &  &  &  &  &  & - \\
    % \midrule
    CoTTA & \tiny{\textit{CVPR'22}} & 58.7 & 51.3 & 33.0 & 20.1 & 34.8 & 20.0 & 15.2 & 11.1 & 11.3 & 18.5 & \underline{4.0} & 34.7 & 18.8 & 19.0 & 17.9 & 24.6 \\
    VDP   & \tiny{\textit{AAAI'23}} & 57.5 & 49.5 & 31.7 & 21.3 & 35.1 & 19.6 & 15.1 & 10.8 & 10.3 & 18.1 & \underline{4.0} & 27.5 & 18.4 & 22.5 & 19.9 & 24.1 \\
    C-MAE & \tiny{\textit{CVPR'24}} & 30.6 & 18.9 & 11.5 & \textbf{10.4} & \textbf{22.5} & 13.9 & 9.8 & \textbf{6.6} & \textbf{6.5} & \textbf{8.8} & \underline{4.0} & \textbf{8.5} & \textbf{12.7} & \textbf{9.2} & \textbf{14.4} & \underline{12.6} \\
    ViDA  & \tiny{\textit{ICLR'24}} & 52.9 & 47.9 & 19.4 & 11.4 & 31.3 & 13.3 & \textbf{7.6} & \underline{7.6} & 9.9 & 12.5 & \textbf{3.8} & 26.3 & 14.4 & 33.9 & 18.2 & 20.1 \\
    % PeTTA & \tiny{\textit{NeurIPS'24}} &  &  &  &  &  &  &  &  &  &  &  &  &  &  &  & - \\
    DPCore & \tiny{\textit{ICML'25}} & 22.0 & 18.2 & 14.9 & 14.3 & 24.4 & 13.9 & 12.0 & 11.6 & 10.7 & 15.0 & 5.7 & 21.8 & 15.6 & 12.7 & 18.0 & 15.4 \\
    Ours  & \tiny{\textit{-}} & \textbf{17.8} & \textbf{14.4} & \textbf{9.4} & \underline{11.4} & \underline{22.8} & \textbf{12.9} & \underline{8.8} & 8.6 & \underline{8.1} & \underline{10.5} & 4.5 & \underline{17.9} & \underline{13.3} & \underline{10.3} & \underline{15.4} & \textbf{12.4} \\
    \bottomrule
     \end{tabularx}
\end{table}

\paragraph{t-SNE Analysis across Different Classes.}
In Figure~\ref{fig:tsne_class}, we leverage t-SNE to visualize the distribution of pseudo-labels for test images, where each class is colour-coded for clarity. Unlike standard t-SNE implementations that rely on Euclidean distance, we adopt cosine similarity to compute pairwise distances between pseudo-labels, aligning with our method's similarity metric design. Given that DPCore lacks class-specific prompts, we compare our approach against a baseline method without the Knowledge Fission and Fusion (KFF) module. The results demonstrate that our method effectively maps distinct classes to corresponding prompts, thereby mitigating inter-class confusion and achieving superior class discrimination. Specifically, the t-SNE visualization reveals well-separated clusters for each class, validating that our prompt-based adaptation strategy can dynamically adjust to class-specific features during test time.

\begin{figure}[]
    \centering  %图片全局居中
	\subfigcapskip=-3pt %设置子图与子标题之间的距离
    \subfigure[Ours without KFI \& KFU]{
    \includegraphics[width=0.3\linewidth]{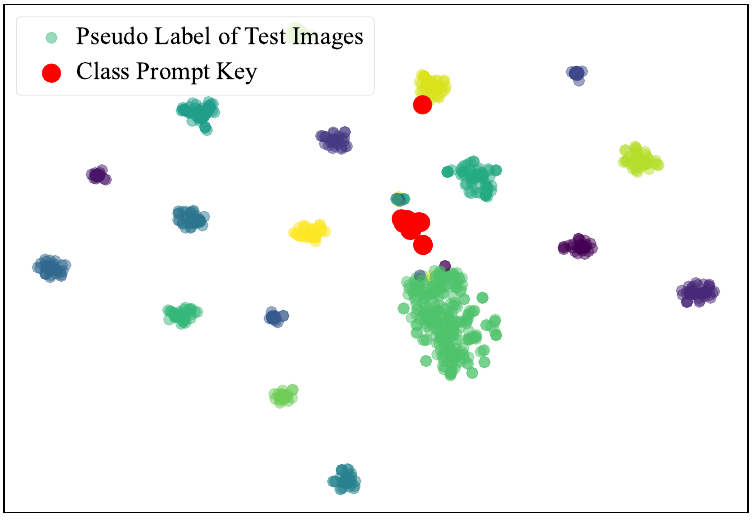}
    }
    \subfigure[Ours]{
		\includegraphics[width=0.3\linewidth]{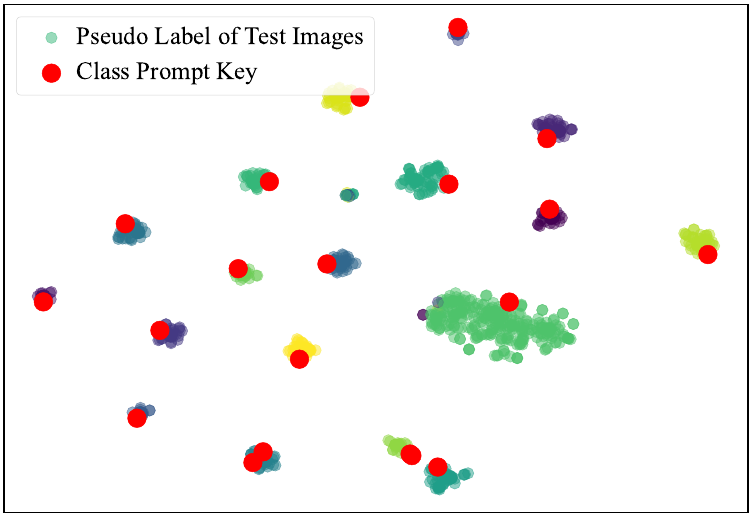}
    }
    \caption{t-SNE analysis across classes. Different colours represent different classes. Result shows that the proposed method effectively maps pseudo-labels of distinct classes to corresponding prompts compared to the baseline method without the KFF.}
    \label{fig:tsne_class}
\end{figure}

\begin{table}[]
\small
    \caption{Computational analysis on ImageNet-to-ImageNet-C with repeating domains.}
    \label{tab:compute 1}
    \centering
    % \resizebox{\textwidth}{!}{
    \begin{tabular}{c|ll|cccc|c}
    \toprule
    % \multicolumn{2}{c}{Part}                   \\
    % \cmidrule(r){1-2}
   & Method & Venue & Params.(M) & Time & Mem.(GB) & TFLOPs & Err Mean \\
    \midrule
    
    \multirow{5}{*}{\rotatebox{90}{Round 1}}& Tent & \tiny{\textit{ICLR'21}} & 0.03 & 1.0 & 5.5 & 1.08 & 51.0 \\
    & CoTTA & \tiny{\textit{CVPR'22}} & 86.57 & 4.7 & 16.2 & 3.24 & 49.9 \\
    & ViDA & \tiny{\textit{ICLR'24}} & 93.70 & 35.3 & 9.3 & 14.03 & 43.4 \\
    & DPCore & \tiny{\textit{ICML'25}} & 0.08 & 1.6 & 5.7 & 1.67 & 39.9 \\
    & Ours & \tiny{\textit{-}} & 0.09 & 1.9 & 6.0 & 1.68 & 34.8 \\
    
    \midrule

    \multirow{5}{*}{\rotatebox{90}{Round 10}}& Tent & \tiny{\textit{ICLR'21}} & 0.03 & 1.0 & 5.5 & 1.08 & 99.9 \\
    & CoTTA & \tiny{\textit{CVPR'22}} & 86.57 & 4.7 & 16.2 & 3.24 & 53.5 \\
    & ViDA & \tiny{\textit{ICLR'24}} & 93.70 & 35.3 & 9.3 & 14.03 & 42.3 \\
    & DPCore & \tiny{\textit{ICML'25}} & 1.03 & 2.1 & 8.4 & 1.73 & 46.8 \\
    & Ours & \tiny{\textit{-}} & 0.20 & 1.8 & 6.1 & 1.60 & 34.5 \\
    
    \bottomrule
  \end{tabular}
  % }
\end{table}

\paragraph{Further Comparison on Efficiency.} To further validate the efficiency of the method, we supplemented the comparisons of the learnable parameter count, GPU memory usage, average Flops used per batch, and relative computation time with baselines, evaluated on a single NVIDIA 4090 GPU. The experiments were carried out in the repeating domain setting, with the results presented in Table~\ref{tab:compute 1}. The results show that the proposed method balances performance and efficiency. When compared to Tent (superior raw efficiency), it gains 15.2\% performance with minimal extra cost. Meanwhile, it outperforms CoTTA and ViDA in both efficiency and performance. Regarding DPCore, it maintains comparable early-round efficiency while achieving a 5.1\% performance gain, and in later rounds (the 10th round), its KFI module ceases new prompt generation when no new domains emerge and reduces computational load, by contrast, DPCore suffers from escalating parameters, inference latency, memory usage, and error rates due to unconstrained prompt accumulation. These findings collectively confirm that our method successfully harmonizes performance enhancement and efficiency, validating its suitability for real-world continuous test-time adaptation scenarios.

\begin{figure}
  \centering
  \includegraphics[width=0.6\textwidth]{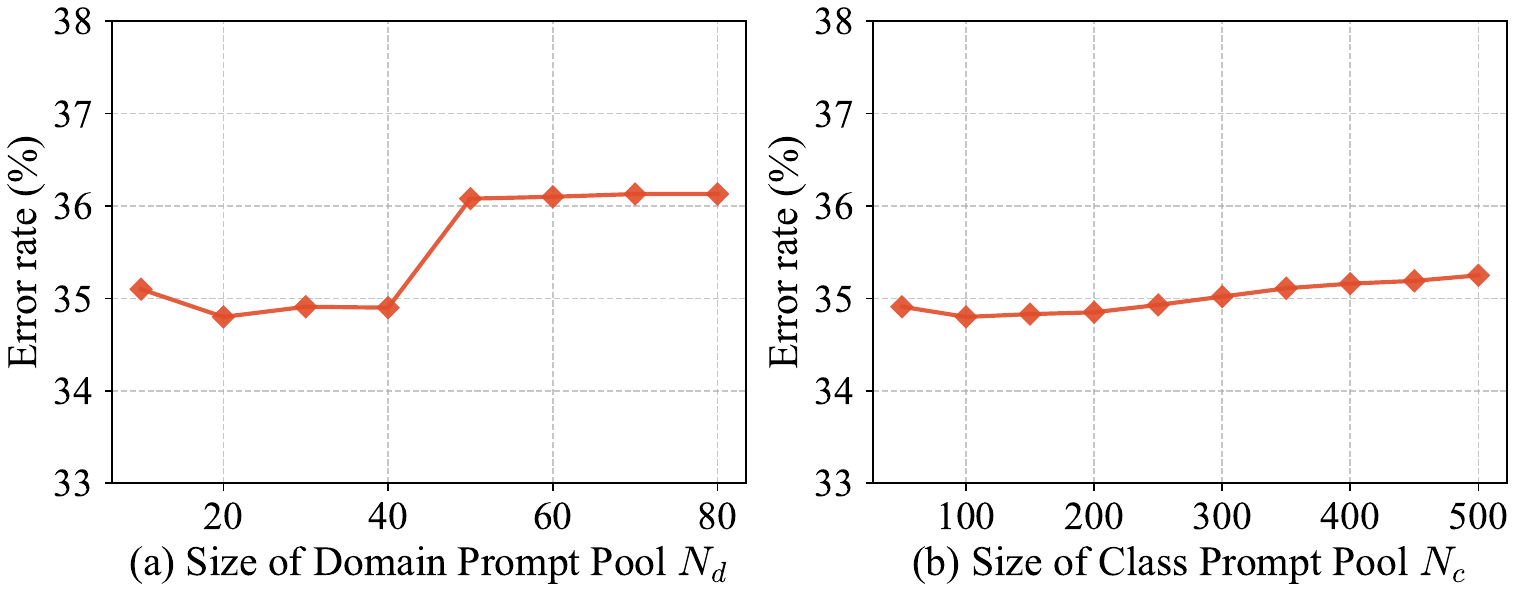}
  % \vspace{-15pt}
  \caption{Ablation Study on ImageNet-to-ImageNet-C online CTTA task.}
 \label{fig:more_ablation}
  \vspace{-10pt}
\end{figure}
\paragraph{Further Analysis towards the Hyper-parameters \(N_c\) and \(N_d\).}
\label{paragraph: more_ablation}
We further explored how \(N_c\) and \(N_d\) balance pool expansion control and retention of subtle, future-valuable knowledge by spanning a broader parameter range. The results in Figure~\ref{fig:more_ablation} show that excessively small N restricts prompt diversity, losing critical subtle knowledge for future adaptation, while overly large N causes uncontrolled pool expansion, leading to poorly optimized redundant prompts due to limited samples. Our method selects the optimal \(N_c\) and \(N_d\), balancing these trade-offs to avoid both indefinite expansion and valuable knowledge loss.

% \begin{table}
% \caption{table}
% \label{tab:1}
% \centering
% \subtable[subtab1]{
% \small
% \begin{tabular}{c|ccccccc}
% \toprule
%  & 10 & 15 & 20 & 25 & 30 & 35 & 40 \\
% \midrule
% ImageNet-C & 10 & 15 & 20 & 25 & 30 & 35 & 40 \\
% Cifar10-C & 10 & 15 & 20 & 25 & 30 & 35 & 40 \\
% Cifar100-C & 10 & 15 & 20 & 25 & 30 & 35 & 40 \\
% \bottomrule
% \end{tabular}
% \label{tab:subtab11}
% }
% \subtable[subtab2]{
% \small
% \begin{tabular}{c|ccccccc}
% \toprule
%  & 10 & 15 & 20 & 25 & 30 & 35 & 40 \\
% \midrule
% ImageNet-C & 10 & 15 & 20 & 25 & 30 & 35 & 40 \\
% Cifar10-C & 10 & 15 & 20 & 25 & 30 & 35 & 40 \\
% Cifar100-C & 10 & 15 & 20 & 25 & 30 & 35 & 40 \\
% \bottomrule
% \end{tabular}
% \label{tab:subtab12}
% }
% \end{table}

\begin{table}
\caption{Effect of \(\alpha\) and \(\gamma\) across different datasets. Average error rate for CTTA tasks. Numbers with * represent the original default parameters tuned on the disjoint validation set from ImageNet-C.}
\label{tab:ablation_more}
\centering

\subtable[Effect of \(\alpha_c\) across different datasets.]{
\small
\begin{tabularx}{0.48\textwidth}
{c|
>{\centering\arraybackslash}X
>{\centering\arraybackslash}X
>{\centering\arraybackslash}X
>{\centering\arraybackslash}X
>{\centering\arraybackslash}X
>{\centering\arraybackslash}X
>{\centering\arraybackslash}X}
\toprule
% \diagbox{}{\(\alpha_c\)}
\diagbox{}{}
  & 0 & 0.05 & 0.1* & 0.15 & 0.2 & 0.3 & 0.5 \\
\midrule
IN-C & 34.9 & 34.8 & 34.8 & 34.8 & 34.8 & 34.9 & 35.0 \\
C10-C & 12.6 & 12.5 & 12.4 & 12.4 & 12.5 & 12.6 & 12.6 \\
C100-C & 22.7 & 22.7 &22.5 & 22.5 & 22.6 & 22.5 & 22.6 \\
\bottomrule
\end{tabularx}
\label{tab:subtab11}
}
\subtable[Effect of \(\alpha_d\) across different datasets.]{
\small
\begin{tabularx}{0.48\textwidth}
{c|
>{\centering\arraybackslash}X
>{\centering\arraybackslash}X
>{\centering\arraybackslash}X
>{\centering\arraybackslash}X
>{\centering\arraybackslash}X
>{\centering\arraybackslash}X
>{\centering\arraybackslash}X}
\toprule
% \diagbox{}{\(\alpha_d\)}
\diagbox{}{}
 & 0 & 0.05 & 0.1* & 0.15 & 0.2 & 0.3 & 0.5 \\
\midrule
IN-C & 34.9 & 34.8 & 34.8 & 34.8 & 34.9 & 34.9 & 34.9 \\
C10-C & 12.5 & 12.4 & 12.4 & 12.5 & 12.5 & 12.6 & 12.6 \\
C100-C & 22.7 & 22.6 & 22.5 & 22.5 & 22.5 & 22.6 & 22.6 \\
\bottomrule
\end{tabularx}
\label{tab:subtab12}
}

\subtable[Effect of \(\gamma_c\) across different datasets.]{
\small
\begin{tabular}{c|ccccccc}
\toprule
\diagbox{}{}
 & 1e-4 & 1e-3 & 5e-3* & 7e-3 & 1e-2 & 5e-2 & 1e-1 \\
\midrule
IN-C & 36.0 & 35.2 & 34.8 & 34.8 & 35.1 & 35.2 & 35.8 \\
C10-C & 12.7 & 12.6 & 12.4 & 12.4 & 12.3 & 12.2 & 12.2 \\
C100-C & 23.0 & 22.7 & 22.5 & 22.5 & 22.4 & 22.1 & 22.3 \\
\bottomrule
\end{tabular}
\label{tab:subtab11}
}
\subtable[Effect of \(\gamma_d\) across different datasets.]{
\small
\begin{tabularx}{0.48\textwidth}
{c|
>{\centering\arraybackslash}X
>{\centering\arraybackslash}X
>{\centering\arraybackslash}X
>{\centering\arraybackslash}X
>{\centering\arraybackslash}X
>{\centering\arraybackslash}X
>{\centering\arraybackslash}X}
\toprule
\diagbox{}{}
 & 10 & 15 & 20 & 25* & 30 & 35 & 40 \\
\midrule
IN-C & 36.2 & 35.0 & 34.8 & 34.8 & 35.0 & 35.7 & 35.1 \\
C10-C & 12.7 & 12.5 & 12.4 & 12.4 & 12.4 & 12.5 & 12.5 \\
C100-C & 22.9 & 22.8 & 22.6 & 22.5 & 22.5 & 22.7 & 22.8 \\
\bottomrule
\end{tabularx}
\label{tab:subtab12}
}
\subtable[Effect of \(\gamma_h\) across different datasets.]{
\small
\begin{tabularx}{0.48\textwidth}{c|
>{\centering\arraybackslash}X
>{\centering\arraybackslash}X
>{\centering\arraybackslash}X
>{\centering\arraybackslash}X
>{\centering\arraybackslash}X
>{\centering\arraybackslash}X
>{\centering\arraybackslash}X}
\toprule
\diagbox{}{}
 & 1 & 1.5 & 2* & 2.5 & 3 & 3.5 & 4 \\
\midrule
IN-C & 35.0 & 34.8 & 34.8 & 35.0 & 35.1 & 35.4 & 35.8 \\
C10-C & 12.6 & 12.5 & 12.4 & 12.4 & 12.5 & 12.5 & 12.5 \\
C100-C & 22.8 & 22.7 & 22.5 & 22.5 & 22.6 & 22.7 & 22.7 \\
\bottomrule
\end{tabularx}
\label{tab:subtab11}
}
\end{table}
\paragraph{More Analysis towards the Hyper-parameters \(\gamma\) and \(\alpha\).}
We conducted extensive experiments on the influences of key hyper-parameters (\(\gamma\) and \(\alpha\)) on knowledge fission and fusion, with results summarized in Table~\ref{tab:ablation_more}.
The results show that while the optimal \(\gamma_c\) and \(\gamma_d\) value varies slightly between ImageNet-C, CIFAR10, and CIFAR100, the performance remains stable within a reasonable range (0.5\%) across all datasets, and that varying \(\alpha\) and \(\gamma_h\) within a typical range leads to performance fluctuations of less than 0.2\% on different datasets.
Furthermore, excessively small or large \(\gamma_c / \gamma_d\) degrades performance consistently: a small \(\gamma_c / \gamma_d\) introduces irrelevant knowledge, while a large \(\gamma_c / \gamma_d\) limits knowledge reuse. This consistent cross-dataset behavior highlights \(\gamma_c / \gamma_d\)’s robustness: though not universally optimal, its core tuning logic, balancing knowledge relevance and reuse, generalizes well, simplifying deployment via systematic adjustment rather than arbitrary search.

\paragraph{Consistency of the Proposed Method Under Different Random Seeds.}
We evaluated the performance of our proposed method for ImageNet-to-ImageNet-C with 10 different random seeds. As demonstrated in Table~\ref{tab:mean-std}, our approach exhibits consistent performance across different initializations, highlighting its stability in handling varied starting conditions.
\begin{table}[]
\small
    \caption{Classification error rate(\%) for ImageNet-to-ImageNet-C online CTTA task with 10 different random seeds (R1-R10).}
    \label{tab:mean-std}
    \centering
    % \resizebox{\textwidth}{!}{
    \begin{tabular}{l|cccccccccc|cc}
    \toprule
    % \multicolumn{2}{c}{Part}                   \\
    % \cmidrule(r){1-2}
    Method & R1 & R2 & R3 & R4 & R5 & R6 & R7 & R8 & R9 & R10 & \textbf{Mean} &\textbf{Std} \\
    \midrule
    Ours & 35.4 & 35.5 & 34.6 & 34.9 & 34.6 & 34.8 & 34.7 & 35.2 & 34.8 & 34.8 & 34.8 & 0.3 \\
    \bottomrule
  \end{tabular}
  % }
\end{table}

\section{Limitations}
\label{section:limitations}
Although our method shows a great improvement in CTTA, there are some limitations that may affect its broader application. 
Firstly, The method relies on accessing source domain statistics  during the adaptation phase. This dependency poses challenges in scenarios where source data is proprietary, privacy-restricted, or unavailable due to compliance barriers (\emph{e.g.,} medical/financial datasets).
Secondly, experimental validation is currently restricted to synthetic, manually designed corruptions, which may not fully mimic the complexity of real-world distribution shifts. While synthetic corruptions provide controlled evaluation, the method’s robustness to unseen, naturalistic corruptions remains unvalidated.
Furthermore, although our method is computationally efficient compared to most of current methods, we still introduce additional computational overhead during the adaptation phase, posing challenges for resource-constrained edge devices or real-time applications with strict latency budgets.
% Our methods assumes that the batch statistics forms a well-separated, which might not hold true in real-world applications, especially when test batches may contain a mix of samples from various domains.

\section{Broader Impact}
\label{sec:broader impact}
Our method enhances the generalization of the pre-trained model in continual test-time adaptation, promoting its application in the real world.

\section{Asset License and Consent}
\label{sec.asset}
The ViTs we used is loaded from \href{https://github.com/huggingface/pytorch-image-models}{timm}, which is released under Apache-2.0 license. Some components of code and pretrained model is utilized from the official repository of CoTTA~\cite{wang2022continual}, ViDA~\cite{liu2023vida},
DPCore~\cite{zhang2025dpcoredynamicpromptcoreset} and RobustBench~\cite{croce2021robustbench}, which is released under MIT license. The datasets used, CIFAR-10-C, CIFAR-100-C and ImageNet-C~\cite{hendrycks2019robustness, cifar}, are publicly available online, released under Apache-2.0 license.

\end{document}